\newtheorem{theorem}{Theorem}
\newtheorem{Theorem}{Theorem}
\newtheorem{Corollary}{Corollary}
\newtheorem{definition}{Definition}
\newtheorem{corollary}{Corollary}
\DeclareMathOperator*{\argmin}{arg\!\min}
\crefname{section}{Sec.}{Secs.}
\Crefname{section}{Section}{Sections}
\Crefname{table}{Table}{Tables}
\crefname{table}{Tab.}{Tabs.}
\begin{document}

%%%%%%%%% TITLE - PLEASE UPDATE
\title{Solving Oscillation Problem in Post-Training Quantization Through a Theoretical Perspective}

\author{
    Yuexiao Ma\textsuperscript{\rm 1}\thanks{This work was done when Yuexiao Ma was intern at ByteDance Inc. Code is available at: \url{https://github.com/bytedance/MRECG}},
    Huixia Li\textsuperscript{\rm 2},
    Xiawu Zheng\textsuperscript{\rm 3},
    Xuefeng Xiao\textsuperscript{\rm 2},
    Rui Wang\textsuperscript{\rm 2}, \\
    Shilei Wen\textsuperscript{\rm 2},
    Xin Pan\textsuperscript{\rm 2},
    Fei Chao\textsuperscript{\rm 1},
    Rongrong Ji\textsuperscript{\rm 1,4}\thanks{Corresponding Author: rrji@xmu.edu.cn} \\
    \textsuperscript{\rm 1} Key Laboratory of Multimedia Trusted Perception and Efficient Computing, \\Ministry of Education of China, School of Informatics, Xiamen University, 361005, P.R. China.\\
    \textsuperscript{\rm 2} ByteDance Inc.
    \textsuperscript{\rm 3} Peng Cheng Laboratory, Shenzhen, China. \\
    \textsuperscript{\rm 4} Shenzhen Research Institute of Xiamen University.\\
    {\tt\small bobma@stu.xmu.edu.cn, zhengxw01@pcl.ac.cn, \{fchao, rrji\}@xmu.edu.cn,} \\
    {\tt\small\{lihuixia, xiaoxuefeng.ailab, ruiwang.rw, zhengmin.666, panxin.321\}@bytedance.com.} \\
}
\maketitle

%%%%%%%%% ABSTRACT
\begin{abstract}
Post-training quantization (PTQ) is widely regarded as one of the most efficient compression methods practically, benefitting from its data privacy and low computation costs. We argue that an overlooked problem of oscillation is in the PTQ methods. In this paper, we take the initiative to explore and present a theoretical proof to explain why such a problem is essential in PTQ. And then, we try to solve this problem by introducing a principled and generalized framework theoretically. In particular, we first formulate the oscillation in PTQ and prove the problem is caused by the difference in module capacity. To this end, we define the module capacity (ModCap) under data-dependent and data-free scenarios, where the differentials between adjacent modules are used to measure the degree of oscillation. The problem is then solved by selecting top-k differentials, in which the corresponding modules are jointly optimized and quantized. Extensive experiments demonstrate that our method successfully reduces the performance drop and is generalized to different neural networks and PTQ methods. For example, with $2$/$4$ bit ResNet-$50$ quantization, our method surpasses the previous state-of-the-art method by $1.9\%$. It becomes more significant on small model quantization, e.g. surpasses BRECQ method by $6.61\%$ on MobileNetV$2$ $\times 0.5$.
\end{abstract}

\begin{figure}[t]
\centering
\includegraphics[width=1\linewidth]{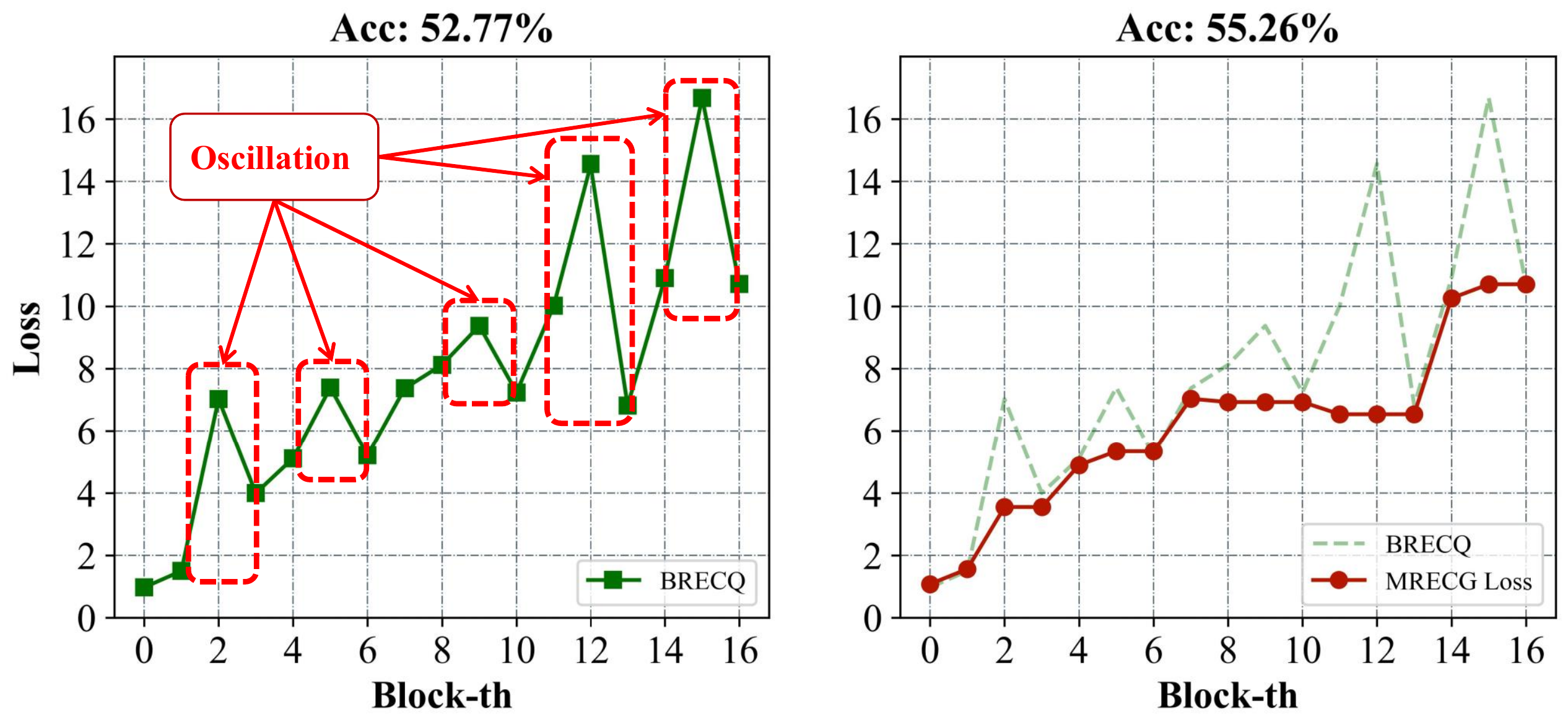}
\caption{\textbf{Left:} Reconstruction loss distribution of BRECQ \cite{li2021brecq} on $0.5$ scaled MobileNetV$2$ quantized to $4$/$4$ bit. Loss oscillation in BRECQ during reconstruction see red dashed box. \textbf{Right:} Mixed reconstruction granularity (MRECG) smoothing loss oscillation and achieving higher accuracy.}
\label{fig:loss_distribution}
\end{figure}

%%%%%%%%% BODY TEXT
\section{Introduction}
\label{sec:intro}

Deep Neural Networks (DNNs) have rapidly become a research hotspot in recent years, being applied to various scenarios in practice. 
However, as DNNs evolve, better model performance is usually associated with huge resource consumption from deeper and wider networks \cite{he2016deep,krizhevsky2017imagenet,szegedy2017inception}. 
Meanwhile, the research field of neural network compression and acceleration, which aims to deploy models in resource-constrained scenarios, is gradually gaining more and more attention, including but not limited to Neural Architecture Search \cite{liu2018darts, liu2018progressive, zheng2019multinomial, Zheng_2022_CVPR, zheng2020rethinking, zheng2021migo, Evolvingzheng, xia2022progressive, zhang2021you, zheng2023ddpnas, zhang2023targeted}, network pruning \cite{han2015deep, zheng2021information, Elkerdawy_2022_CVPR, Li_2022_CVPR, Wimmer_2022_CVPR, Shen_2022_CVPR}, and quantization \cite{esser2019learned, gong2019differentiable, choi2018pact, wang2019haq, nagel2020up, li2021brecq, wei2022qdrop, li2020pams, ma2021ompq}. 
Among these methods, quantization proposed to transform float network activations and weights to low-bit fixed points, which is capable of accelerating inference \cite{krishnamoorthi2018quantizing} or training \cite{zhu2020towards} speed with little performance degradation.

In general, network quantization methods are divided into quantization-aware training (QAT) \cite{choi2018pact, gong2019differentiable, esser2019learned} and post-training quantization (PTQ) \cite{li2021brecq,nagel2020up, wei2022qdrop, hubara2021accurate}. 
The former reduces the quantization error by quantization fine-tuning. Despite the remarkable results, the massive data requirements and high computational costs hinder the pervasive deployment of DNNs, especially on resource-constrained devices.
Therefore, PTQ is proposed to solve the aforementioned problem, which requires only minor or zero calibration data for model reconstruction. 
Since there is no iterative process of quantization training, PTQ algorithms are extremely efficient, usually obtaining and deploying quantized models in a few minutes. 
However, this efficiency often comes at the partial sacrifice of accuracy. PTQ typically performs worse than full precision models without quantization training, especially in low-bit compact model quantization. 
Some recent algorithms \cite{nagel2020up,li2021brecq,wei2022qdrop} try to address this problem. 
For example, Nagel \emph{et al.}~\cite{nagel2020up} constructs new optimization functions by second-order Taylor expansions of the loss functions before and after quantization, which introduces soft quantization with learnable parameters to achieve adaptive weight rounding. 
Li \emph{et al.}~\cite{li2021brecq} changes layer-by-layer to block-by-block reconstruction and uses diagonal Fisher matrices to approximate the Hessian matrix to retain more information. 
Wei \emph{et al.}~\cite{wei2022qdrop} discovers that randomly disabling some elements of the activation quantization can smooth the loss surface of the quantization weights. 

However, we observe that all the above methods show different degrees of oscillation with the deepening of the layer or block during the reconstruction process, as illustrated in the left sub-figure of Fig.~\ref{fig:loss_distribution}. 
We argue that the problem is essential and has been overlooked in the previous PTQ methods. 
In this paper, through strict mathematical definitions and proofs, we answer $3$ questions about the oscillation problem, which are listed as follows:

(\romannumeral1). \emph{\textbf{Why the oscillation happens in PTQ?}} To answer this question, we first define module topological homogeneity, which relaxes the module equivalence restriction to a certain extent. And then, we give the definition of module capacity under the condition of module topological homogeneity. In this case, we can prove that when the capacity of the later module is large enough, the reconstruction loss will break through the effect of quantization error accumulation and decrease. On the contrary, if the capacity of the later module is smaller than that of the preceding module, the reconstruction loss increases sharply due to the amplified quantization error accumulation effect. Overall, we demonstrate that the oscillation of the loss during PTQ reconstruction is \textbf{caused by the difference in module capacity;}

(\romannumeral2). \emph{\textbf{How the oscillation will influence the final performance?}}
We observe that the final reconstruction error is highly correlated with the largest reconstruction error in all the previous modules by randomly sampling a large number of mixed reconstruction granularity schemes. In other words, when oscillation occurs, the previous modules obviously have larger reconstruction errors, thus leading to worse accuracy in PTQ;

(\romannumeral3). \emph{\textbf{How to solve the oscillation problem in PTQ?}} Since oscillation is caused by the different capacities of the front and rear modules, we propose the \textbf{M}ixed \textbf{REC}onstruction \textbf{G}ranularity (MRECG) method which jointly optimizes the modules where oscillation occurs. Besides, our method is applicable in data-free and data-dependent scenarios, which is also compatible with different PTQ methods. 
In general, our contributions are listed as follows:
\begin{itemize}
\item We reveal for the first time the oscillation problem in PTQ, which has been neglected in previous algorithms. However, we discover that smoothing out this oscillation is essential in the optimization of PTQ.
\item We show theoretically that this oscillation is caused by the difference in the capability of adjacent modules. A small module capability exacerbates the cumulative effect of quantization errors making the loss increase rapidly, while a large module capability reduces the cumulative quantization errors making the loss decrease.
\item To solve the oscillation problem, we propose a novel \textbf{M}ixed \textbf{REC}onstruction \textbf{G}ranularity (MRECG) method, which employs loss metric and module capacity to optimize mixed reconstruction granularity under data-dependency and data-free scenarios. The former finds the global optimum  with moderately higher overhead and thus has the best performance. The latter is more effective with a minor performance drop.
\item We validate the effectiveness of the proposed method on a wide range of compression tasks in ImageNet. In particular, we achieve a Top-$1$ accuracy of 58.49\% in MobileNetV$2$ with $2$/$4$ bit, which exceeds current SOTA methods by a large margin. Besides, we also confirm that our algorithm indeed eliminates the oscillation of reconstruction loss on different models and makes the reconstruction process more stable.
\end{itemize}

\begin{figure*}[t]
\centering
\includegraphics[width=1\linewidth]{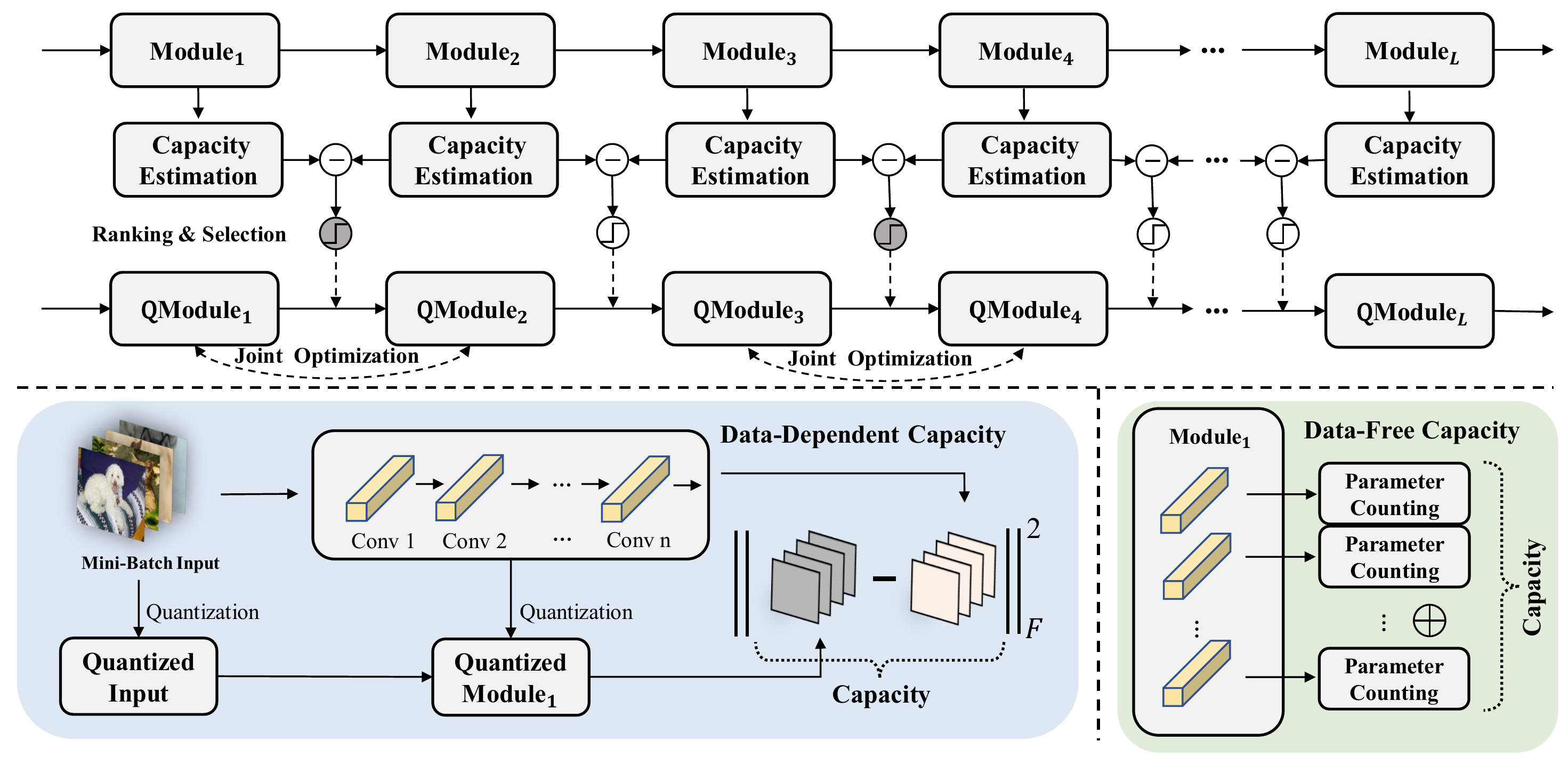}
\caption{Overview of our method. Top: Optimization process for MRECG. We first estimate the capacity of each module, then we rank the capacity difference of adjacent modules, and finally we jointly optimize the adjacent modules corresponding to the top-k capacity difference. Bottom left: Module capacity estimation for data-dependent scenarios. We calculate the squared Frobenius norm of the feature map difference before and after quantization in the PTQ process. Bottom right: Module capacity estimation for the data-free scenario. We sum the convolutional layer capacities corresponding to the modules.}
\label{fig:overview}
\end{figure*}

\section{Related Work}
\label{sec:related_work}

\noindent
\textbf{Quantization:} Quantization can be divided into two categories: \textbf{Q}uantization-\textbf{A}ware \textbf{T}raining (QAT) and \textbf{P}ost-\textbf{T}raining \textbf{Q}uantization (PTQ). QAT~\cite{choi2018pact, esser2019learned, zhuang2018towards} uses the entire training dataset for quantization training and updates the gradients by back-propagation of the network to eliminate quantization errors.
% QAT often makes some quantization parameters learnable to approximate their optimal solutions for a given dataset due to the inclusion of quantization training. 
% Choi \emph{et al.}~\cite{choi2018pact} makes the clip bound of activation values learnable to achieve a balance between minimizing quantization error and maximizing gradient flow efficiency. Esser \emph{et al.}~\cite{esser2019learned} focuses more on the gradient of step size at the transition point between weight and activation quantization states. 
% QAT also includes some research on training strategies. Zhuang \emph{et al.}~\cite{zhuang2018towards} propose some quantization schemes like gradually transition from high bit to low bit, separate weight and activation quantization, and joint training of high precision and low precision models.
Although QAT integrates various training methods to achieve higher accuracy, this process is often resource intensive and limited in certain data privacy scenarios. Our research interest is not in this field.

PTQ has attracted increasing attention in recent years due to the advantages of efficient model deployment and low data dependence. Since quantization training is not included, PTQ algorithms usually use a small calibration dataset for reconstruction to obtain a better-quantized model by optimizing the approximation of the second-order Taylor expansion term for task loss, which is introduced in Sec.~\ref{sec:intro}.
% Nagel \emph{et al.}~\cite{nagel2020up} introduce learnable parameters for soft quantization during reconstruction to achieve adaptive weight rounding. Li \emph{et al.}~\cite{li2021brecq} change layer-by-layer reconstruction to block-by-block reconstruction to further improve accuracy. Wei \emph{et al.}~\cite{wei2022qdrop} discovers that disabling the quantization of some activation values contributes to improving the loss surface of the quantized weights.

\noindent
\textbf{Module Capacity:} Some common parameters affect the module capacity, for example, the size of the filter, the bit width of the weight parameter, and the number of convolutional groups. In addition, some investigations show that the stride and residual shortcuts also affect the module capacity. Kong \emph{et al.}~\cite{kong2017take} show that convolution with stride=2 can be equivalently replaced by convolution with stride=1. Meanwhile, the filter size of the replaced convolution is larger than that of the original convolution, implying an increase in module capacity.
The MobileNetV$2$ proposed by \cite{sandler2018mobilenetv2} contains depth-wise convolution, which does not contain the exchange of information between channels, so it somehow compromises the model performance. 
Liu \emph{et al.}~\cite{liu2018bi} argue that the input of full precision residual shortcuts increases the representation capacity of the quantized module.

\section{Methodology}
\label{sec:methodology}

In this section, we first prove that the oscillation problem of PTQ is highly correlated with the module capacity by a theorem and corollary. 
Secondly, we construct the capacity difference optimization problem and present two solutions in data-dependent and data-free scenarios, respectively. 
Finally, we analyze expanding the batch size of the calibration data to reduce the expectation approximation error, which shows a trend of diminishing marginal utility.

\subsection{Oscillation problem of PTQ}
\label{subsec:osci}

Without loss of generality, we employ modules as the basic unit of our analysis. 
In particular, different from BRECQ, module granularity in this paper is more flexible, which represents the layer, block, or even stage granularity. 
Formally, we set $ f_i^{(n_i)}(\cdot) $ be the $i$-th (i=1,2,...,L) module of the neural network which contains $n_i$ convolutional layers. We propose a more general reconstruction loss under the module granularity framework as follows,

\begin{equation}\label{eq:definition}
	 \mathcal{L}(W_i, X_i) = \mathbb{E}\left [ {\Vert f_i^{(n_i)}(W_i, X_i)- f_i^{(n_i)}(\widetilde{W}_i, \widetilde{X}_i)\Vert}_F^2 \right ], 
\end{equation}
where $W_i, X_i$ are the weights and inputs of the $i$-th module, respectively. $ \widetilde{W}_i, \widetilde{X}_i $ are the corresponding quantized version. When $f_i^{(n_i)}(\cdot)$ contains only one convolution layer, Eq.~\ref{eq:definition} degenerates to the optimization function $ {\Vert W_iX_i- \widetilde{W}_i\widetilde{X}_i\Vert}_F^2$ in AdaRound \cite{nagel2020up}. When $f_i^{(n_i)}(\cdot)$ contains all convolutional layers within the $i$-th block, Eq.~\ref{eq:definition} degenerates to the optimization function $\mathbb{E} \left [ \Delta z^{(i),T}H^{z^{(i)}}\Delta z^{(i)} \right ]$ in BRECQ \cite{li2021brecq}. Note that we ignore the regularization term in AdaRound which facilitates convergence since it converges to $0$ after each module is optimized. In addition, we omit the squared gradient scale in BRECQ for simplicity. 
If two modules have the same number of convolutional layers and all associated convolutional layers have the same hyper-parameters, they are called to be equivalent (including but not limited to additional residual input, kernel size, channels, groups, and stride).

\begin{figure}[t]
\centering
\includegraphics[width=1.0\linewidth]{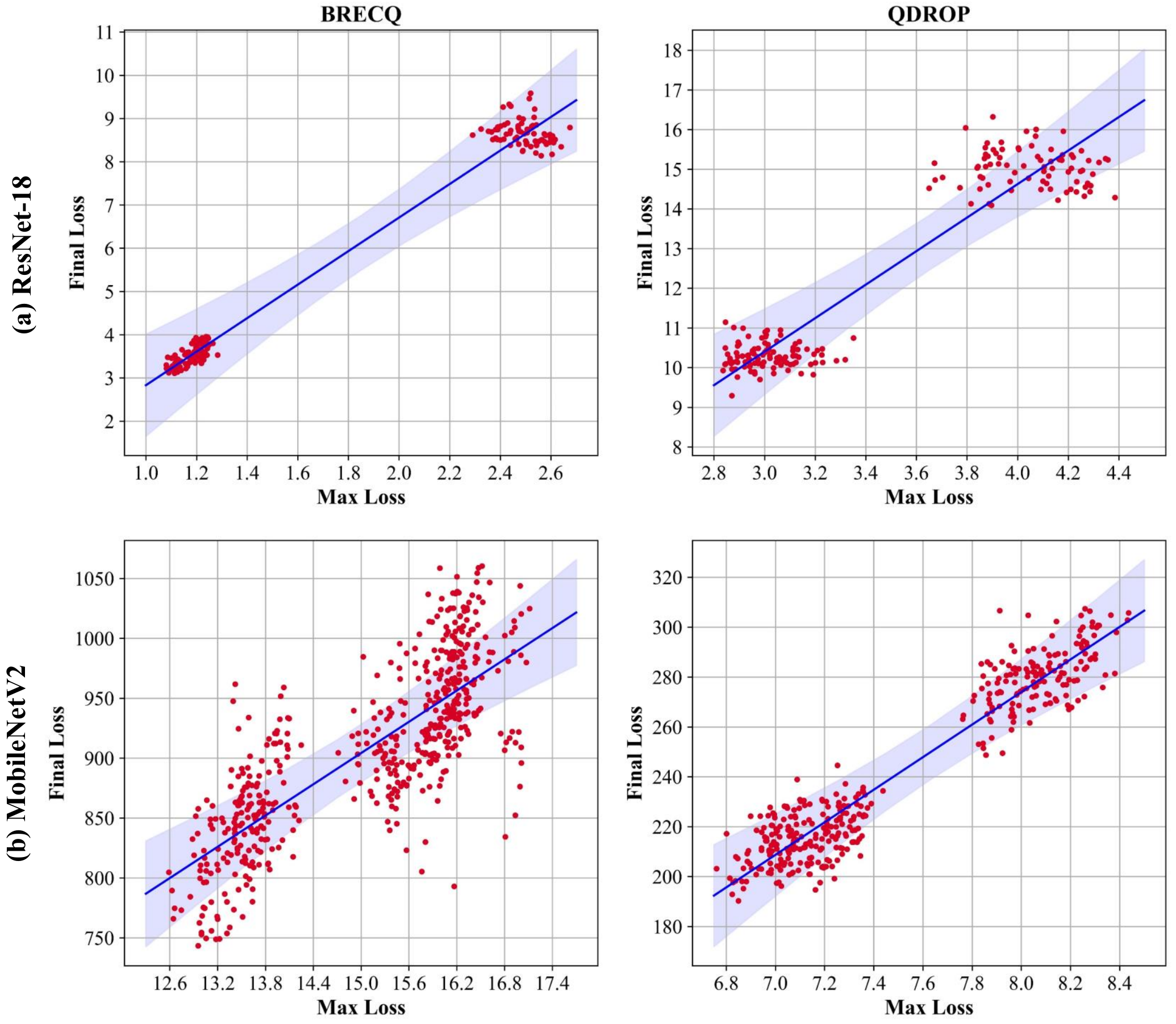}
\caption{The relationship between the final reconstruction error and the maximum reconstruction error of the previous module. We randomly sample a number of mixed reconstruction granularity schemes and recover the accuracy using BRECQ or QDROP. We record the final reconstruction error and the maximum reconstruction error of the previous modules for each scheme. The first row of the figure represents the experiments on $4$/$4$ bit ResNet-$18$. The second row of the figure represents the experiments on $4$/$4$ bit $0.5$ scaled MobileNetV$2$. The first and second columns represent using the BRECQ or QDROP algorithms to recover the accuracy.}
\label{fig:max_final_loss_distribution}
\end{figure}

There exists an accumulation effect of quantization error in quantization \cite{yun2021all}, which is manifested by the increasing tendency of quantization error in the network influenced by the quantization of previous layers. Since PTQ does not contain quantization training, this accumulation effect is more obvious in reconstruction loss. We propose the following theorem to demonstrate that the accumulation effect of quantization error in PTQ leads to an incremental loss.

\begin{theorem}
\label{the:error_cumul}
% Suppose the inputs and weights of two adjacent modules are independent \footnote{This assumption holds for most of the networks due to the fact that most of them use a regular term. And this regular term can implicitly keep the input and weight of the module to be independent, as this allows for better generalization \cite{tishby2015deep}.}. The input expectations of adjacent modules are approximately equal, and so are the weights. 
Given a pre-trained model and input data. If two adjacent modules are equivalent, we have,
\begin{equation}\label{eq:inequality}
	 \mathcal{L}(W_i, X_i) \leq \mathcal{L}(W_{i+1}, X_{i+1}). 
\end{equation}

\end{theorem}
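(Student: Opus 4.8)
The plan is to unfold the reconstruction loss $\mathcal{L}(W_{i+1},X_{i+1})$ by exploiting the fact that module $i{+}1$ is equivalent to module $i$ and that its input $X_{i+1}$ is exactly the (full-precision) output $f_i^{(n_i)}(W_i,X_i)$, while the quantized input $\widetilde{X}_{i+1}$ is the quantized output $f_i^{(n_i)}(\widetilde{W}_i,\widetilde{X}_i)$. First I would write
\begin{equation*}
\mathcal{L}(W_{i+1},X_{i+1}) = \mathbb{E}\!\left[\big\Vert f_{i+1}^{(n_{i+1})}(W_{i+1},X_{i+1}) - f_{i+1}^{(n_{i+1})}(\widetilde{W}_{i+1},\widetilde{X}_{i+1})\big\Vert_F^2\right],
\end{equation*}
and then insert and subtract the mixed term $f_{i+1}^{(n_{i+1})}(\widetilde{W}_{i+1}, X_{i+1})$ (quantized weights, but clean input) inside the norm. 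The first piece, $f_{i+1}^{(n_{i+1})}(W_{i+1},X_{i+1}) - f_{i+1}^{(n_{i+1})}(\widetilde{W}_{i+1},X_{i+1})$, is the "fresh" quantization error that module $i{+}1$ itself introduces; the second piece, $f_{i+1}^{(n_{i+1})}(\widetilde{W}_{i+1},X_{i+1}) - f_{i+1}^{(n_{i+1})}(\widetilde{W}_{i+1},\widetilde{X}_{i+1})$, is the propagation through module $i{+}1$ of the error $X_{i+1}-\widetilde{X}_{i+1}$ that module $i{+}1$ inherits — and that inherited error has squared expected norm exactly $\mathcal{L}(W_i,X_i)$ by the definition in Eq.~\eqref{eq:definition}.

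The key steps, in order, would be: (1) rewrite $X_{i+1}=f_i^{(n_i)}(W_i,X_i)$ and $\widetilde{X}_{i+1}=f_i^{(n_i)}(\widetilde{W}_i,\widetilde{X}_i)$ so that $\mathbb{E}\Vert X_{i+1}-\widetilde{X}_{i+1}\Vert_F^2 = \mathcal{L}(W_i,X_i)$; (2) perform the add–subtract decomposition above and expand the square, so that $\mathcal{L}(W_{i+1},X_{i+1})$ equals the sum of the fresh-error term, the propagated-error term, and a cross term; (3) argue that the propagated-error term dominates $\mathcal{L}(W_i,X_i)$ — this is where module equivalence (topological homogeneity) is used, since an equivalent module acts on its input in a way that does not contract the error on average (identity/residual shortcuts pass the error through undiminished, and the convolutional branches add nonnegative contributions); (4) conclude that the remaining terms (fresh quantization error plus cross term) are nonnegative in expectation, so $\mathcal{L}(W_{i+1},X_{i+1}) \geq \mathcal{L}(W_i,X_i)$.

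The main obstacle I anticipate is step (3) together with controlling the cross term in step (2): in full generality a nonlinear module can contract its input perturbation, so the inequality cannot hold for arbitrary modules — it must lean essentially on the equivalence hypothesis and presumably on treating the quantization perturbations as zero-mean and (approximately) independent of the signal, so that the cross term vanishes in expectation and the propagation map can be linearized around the clean activations. I would therefore expect the rigorous version to invoke a first-order (Taylor) expansion of $f_{i+1}^{(n_{i+1})}$ in its input argument, model $\widetilde{W}-W$ and $\widetilde{X}-X$ as independent zero-mean noise, and use the structural assumption that equivalent modules have Jacobians with singular values bounded below by $1$ (which holds when a full-precision residual shortcut is present), so that $\mathbb{E}\Vert J(X_{i+1}-\widetilde{X}_{i+1})\Vert_F^2 \geq \mathbb{E}\Vert X_{i+1}-\widetilde{X}_{i+1}\Vert_F^2 = \mathcal{L}(W_i,X_i)$. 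If instead the authors' definition of equivalence silently bundles in such a non-contraction property, the proof collapses to the clean add–subtract argument of steps (1), (2), and (4); otherwise that gap is the crux and I would flag it explicitly.
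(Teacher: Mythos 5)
Your route is genuinely different from the paper's. You decompose $\mathcal{L}(W_{i+1},X_{i+1})$ by adding and subtracting the mixed term $f_{i+1}(\widetilde{W}_{i+1},X_{i+1})$, isolating a ``fresh'' weight-quantization error from the propagation of the inherited input error $X_{i+1}-\widetilde{X}_{i+1}$ (whose expected squared norm is exactly $\mathcal{L}(W_i,X_i)$), and you correctly identify that the whole argument then hinges on the module not contracting that inherited error. The paper instead works only with the single-convolution case $f_i^{(1)}(W_i,X_i)=W_iX_i$, writes $\widetilde{W}=W+\Delta W$, $\widetilde{X}=X+\Delta X$, drops the second-order term $\Delta W\Delta X$, expands the Frobenius norm entrywise into row/column products, uses the batchnorm layers to assert that the full-precision weights and activations of two equivalent modules are identically distributed, and then simply \emph{postulates} the accumulative effect $\mathbb{E}[\Delta x_{i+1}]>\mathbb{E}[\Delta x_i]$ and $\mathbb{E}[\Delta w_{i+1}]>\mathbb{E}[\Delta w_i]$ as a given, before extending to $n$-layer modules by induction. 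So the crux you flag — that a nonlinear module could contract its input perturbation, so one needs either a Jacobian/non-contraction property or an explicit monotone-growth assumption on the raw errors — is not closed by the paper either; it is externalized as the cited ``accumulation effect of quantization error.'' What your decomposition buys is a precise identification of the exact structural property the equivalence hypothesis must supply; what the paper's route buys is a term-by-term comparison that, once the identically-distributed and error-growth assumptions are granted, makes the inequality immediate and inducts cleanly over the number of convolutional layers in a module. Your plan is at least as rigorous as the published proof, and your flagged gap is real: neither argument establishes the inequality without an assumption materially equivalent to the conclusion at the level of activations.
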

Detailed proof is provided in the {supplementary material}. Theorem~\ref{the:error_cumul} illustrates that under the condition that two adjacent modules are equivalent, the quantization error accumulation leads to the increment of loss
% Specifically, we use mathematical induction, \emph{i.e.}, we first prove that the theorem holds when the module contains only one convolutional layer, then we assume that it is also valid when the module contains $n$ convolutional layers, and finally we prove that it holds when the module contains $n+1$ convolutional layers. Therefore, we use inductive reasoning to generalize the above proof to all cases.

However, due to the difference of one or several convolution hyper-parameters (additional residual input, kernel size, channels, groups, stride, etc.), the condition of adjacent module equivalence in the above theorem is difficult to be satisfied in real scenarios. 
Non-equivalent modules will inevitably result in differences in module capacity. It is well-known that the number of parameters and the bit-width of the module affects the module capacity, in which the effect of these hyper-parameters on the module capacity is easy to quantify.
In addition, we introduce in Sec.~\ref{sec:related_work} that the residual input~\cite{liu2018bi}, the convolution type~\cite{sandler2018mobilenetv2}, and the convolution hyperparameters~\cite{kong2017take} all affect the module capacity, which the effects are difficult to quantify. Therefore, we define the concept of module topological homogeneity as follows, which relaxes the restrictions on module equivalence to some extent, while making it possible to compare capacities between different modules.
% Therefore, we first provide the definition of adjacent module topological homogeneity. Then, we use this definition to give a corollary of Theorem~\ref{the:error_cumul}.

\begin{definition} \textbf{\textit{(Module Topological Homogeneity)}} Suppose two modules have the same number of convolutional layers. If the hyper-parameters of the corresponding convolutional layers of the two modules are the same except for the kernel size and channels, we claim that the two modules are topologically homogeneous.
\label{topo_defin}
\end{definition}
% Then, we derive the corollary of Theorem~\ref{the:error_cumul} as follows.

From Definition~\ref{topo_defin}, we relax the restriction of module equivalence on the equality of kernel size and channels in module topology homogeneity, which only results in differences in the number of module parameters. 
In other words, we obviate the problem that the impact of hyper-parameters such as residual inputs and groups makes the module capacity hard to be quantified. 
Specifically, if the module weight containing $n$ convolutional layers is $W=[W_1, W_2,\cdots,W_n]$ and $W_i$ is the weight of the $i$-th convolution layer in the module, then the module capacity (ModCap) is defined as the following equation. 

\begin{equation}
\label{eq:ModCap}
\begin{aligned}
\text{ModCap} = \sum_{i=1}^n \text{params}(W_i)\times b_i \times {\alpha}_i, 
\end{aligned}
\end{equation}
where $\text{params}(\cdot)$ is a function that counts the number of parameters, $b_i$ is the bit-width of the $i$-th convolutional layer, and ${\alpha}_i$ is used to make convolutional layers that have different strides comparable. 
Specifically, convolutional layers of different strides are equated by some transformation~\cite{kong2017take}, accompanied by a change in the number of parameters. Therefore, we convert the stride = 2 layer in the network to implicit stride = 1 layer by multiplying the scaling parameter ${\alpha}_i$. Under this conversion, all layers of the network satisfy the definition of topological homogeneity and thus are comparable with each other. Theoretically, according to the analysis of \cite{kong2017take}, ${\alpha}_i$ can be set to $1.6$, which is generalized in different networks in our paper.
% Note that only when two modules are topologically homogeneous, their ModCap is comparable. This is because we have difficulty quantifying the effect of other convolutional hyper-parameters on ModCap. Since Kong \emph{et al.}~\cite{kong2017take} have proved that the convolutional layer of stride = 2 can be equivalently converted to the layer of stride = 1, and the number of parameters increases after the conversion. Therefore, we multiply the ModCap of stride = 2 layer by ${\alpha}_i$. In this way, the module containing the convolutional layer of stride = 2 can be equivalently converted to an implicit module of stride = 1, and the ModCap between them is comparable.
% Therefore, we use ${\alpha}_i$ to map the number of parameters of the stride=2 convolution to the equivalent stride=1 convolution. In this way, the ModCap of some down-sampled modules is comparable with the adjacent modules. 

Then, we derive a corollary of Theorem~\ref{the:error_cumul} to explain why the oscillations occur in PTQ. 
% \clearpage
\begin{corollary}
\label{corollary_lossvar}
Suppose two adjacent modules be topologically homogeneous. If the module capacity of the later module is large enough, the loss will decrease. Conversely, if the latter module capacity is smaller than the former, then the accumulation effect of quantization error is exacerbated.
\end{corollary}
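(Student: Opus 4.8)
The plan is to reduce the topologically homogeneous case to the equivalent case handled by Theorem~\ref{the:error_cumul} by tracking the extra degrees of freedom introduced by larger kernel size and channel count. First I would revisit the proof of Theorem~\ref{the:error_cumul}: there the inequality $\mathcal{L}(W_i,X_i)\le\mathcal{L}(W_{i+1},X_{i+1})$ comes from the fact that module $i+1$ receives the already-perturbed input $\widetilde{X}_{i+1}$ (carrying the accumulated quantization error from modules $1,\dots,i$) and then adds its own weight-quantization error on top, so the error can only grow. The key observation I would exploit is that this growth estimate has two competing contributions: an \emph{amplification factor} governed by how module $i+1$ propagates its input perturbation, and a \emph{fresh-error} term governed by module $i+1$'s own quantization. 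When the two modules are merely topologically homogeneous rather than equivalent, module $i+1$ may have more parameters (larger kernels, more channels), which gives it more representational freedom to fit $f_{i+1}^{(n_{i+1})}(\widetilde{W}_{i+1},\widetilde{X}_{i+1})$ to the full-precision target — i.e., a larger ModCap shrinks the effective approximation error.

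Concretely, I would write $\mathcal{L}(W_{i+1},X_{i+1})$ as the reconstruction error of module $i+1$ and split it, via the triangle inequality in the Frobenius norm, into a term that propagates the input error $\mathbb{E}\,\lVert f_{i+1}(\widetilde{W}_{i+1},X_{i+1}) - f_{i+1}(\widetilde{W}_{i+1},\widetilde{X}_{i+1})\rVert_F^2$ and a term capturing the intrinsic weight-rounding error $\mathbb{E}\,\lVert f_{i+1}(W_{i+1},X_{i+1}) - f_{i+1}(\widetilde{W}_{i+1},X_{i+1})\rVert_F^2$. The first term is bounded by a Lipschitz/operator-norm constant of module $i+1$ times $\mathcal{L}(W_i,X_i)$ (using that $\widetilde{X}_{i+1}$ is the quantized output of module $i$, so $\lVert X_{i+1}-\widetilde{X}_{i+1}\rVert_F^2$ is controlled by $\mathcal{L}(W_i,X_i)$ up to the activation-quantization contribution). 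The second term is where module capacity enters: with more parameters the rounding perturbation $\widetilde{W}_{i+1}-W_{i+1}$ lives in a higher-dimensional space and, after optimizing the rounding as PTQ does, the residual can be made proportionally smaller — so this intrinsic error scales inversely with ModCap. Then I would argue: if $\text{ModCap}_{i+1}$ is large enough that the sum of (amplification factor $-1$)$\times\mathcal{L}(W_i,X_i)$ plus the (capacity-suppressed) intrinsic term is negative, we get $\mathcal{L}(W_{i+1},X_{i+1})<\mathcal{L}(W_i,X_i)$, i.e., the loss decreases; if instead $\text{ModCap}_{i+1}<\text{ModCap}_i$, both the amplification factor is no smaller and the intrinsic term is no smaller than in the equivalent case, so the Theorem~\ref{the:error_cumul} bound applies \emph{with slack in the wrong direction}, exacerbating the accumulation.

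The main obstacle I anticipate is making the phrase ``module capacity large enough'' quantitatively precise — specifically, establishing that the intrinsic weight-rounding residual after PTQ optimization genuinely decreases as a function of ModCap rather than merely being bounded by it. This requires a model of how the optimized rounding error depends on the number of parameters and bit-width (plausibly an argument that the quantization noise power is spread over more coordinates, or that a larger layer can better absorb the upstream error by partially compensating for it), and it is the step where I would likely have to introduce a mild regularity assumption on the pre-trained weights' distribution or invoke the hyper-parameter analysis of \cite{kong2017take} for the stride-normalization factor $\alpha_i$. The amplification-factor half of the argument is comparatively routine — it is essentially the same operator-norm bookkeeping that underlies Theorem~\ref{the:error_cumul} — so I would spend the bulk of the effort pinning down the capacity-versus-residual relationship and then assembling the two halves into the stated dichotomy.
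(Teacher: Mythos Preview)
Your approach is genuinely different from the paper's. The paper gives a short limiting argument rather than a decomposition: for the ``large capacity'' direction it observes that taking the bit-width in ModCap to $32$ (with activations also unquantized) makes the module's quantization error vanish, so $\lim_{\text{ModCap}\to m}\Delta W=\lim_{\text{ModCap}\to m}\Delta X=0$, and then an $\epsilon$-argument produces a capacity threshold beyond which $\mathcal{L}(W_{i+1},X_{i+1})<\epsilon<\mathcal{L}(W_i,X_i)$; the ``small capacity'' direction runs the same limit toward $0$ (error tending to an upper bound $\zeta$) and combines with Theorem~\ref{the:error_cumul}. Your triangle-inequality/Lipschitz split is more informative in principle --- it isolates \emph{why} capacity helps and could in principle yield a quantitative threshold --- but it also forces you to confront the obstacle you already flag (how the optimized rounding residual actually scales with ModCap), and your upper bound must be pushed below $\mathcal{L}(W_i,X_i)$, which tacitly requires the amplification factor to behave like something $<1$ once the intrinsic term is small. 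The paper sidesteps both issues by driving the whole loss to zero at the full-precision endpoint and invoking continuity; this is simpler and avoids the capacity--residual modeling entirely, though it is correspondingly non-quantitative and, like your sketch, is silent on how enlarging module~$i{+}1$'s capacity alone can suppress $\Delta X_{i+1}$, which carries error accumulated from upstream modules.
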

Detailed proofs are provided in the supplementary material. From Corollary~\ref{corollary_lossvar}, we can conclude that the oscillation problem of PTQ on a wide range of models is caused by the excessive difference in capacities with adjacent modules. 
The logical/correlation chain of our paper: Oscillation $\propto$ The largest error $\propto$ Final error $\propto$ Accuracy. \textbf{Oscillation $\propto$ The largest error}: From Fig.~\ref{fig:all_loss_dis} and a similar figure in the supplementary material, the more severe the oscillation of the loss distribution corresponding to the different algorithms, the larger the peak of the loss. That is, the degree of oscillation is positively correlated with the largest error. \textbf{The largest error $\propto$ Final error}: In addition, our observations in Fig.~\ref{fig:max_final_loss_distribution} show that the largest error is positively correlated with the final error on different algorithms for different models. 
% Finally, the final reconstruction error of the model is positively correlated with the accuracy. 
\textbf{Final error $\propto$ Accuracy:} \textbf{Theoretically}, by performing a Taylor expansion on the accuracy loss function according to BRECQ~\cite{li2021brecq} and Adaround~\cite{nagel2020up}, we can derive the final reconstruction error that is highly correlated to the performance. \textbf{Empirically}, extensive experiments conducted in the above papers also prove the statement. 
In conclusion, the degree of oscillation of the error is positively correlated with the accuracy. Fig.\ref{fig:loss_distribution} in our paper also demonstrates that reducing the degree of oscillation is beneficial to accuracy. In the next section, we will introduce how to eliminate oscillations in PTQ by optimizing module capacity differences.
% Through extensive experiments, we further discover that the final reconstruction loss, which is highly correlated with the model accuracy, is positively proportional to the maximum reconstruction loss of all previous modules. Formally, 
% \begin{equation}
% \label{eq:correlation}
% \begin{aligned}
% & \mathcal{L}(W_L, X_L) \\
% & \propto \max \left ( \mathcal{L}(W_1, X_1),\cdots, \mathcal{L}(W_{L-1}, X_{L-1}) \right ).
% \end{aligned}
% \end{equation} 
% In conclusion, the oscillation problem is caused by the difference in model capacity, which results in a large reconstruction error in PTQ. 

\subsection{Mixed REConstruction Granularity}

The oscillation problem analyzed in Sec.~\ref{subsec:osci} indicates that the information loss due to the difference in module capacities will eventually affect the performance of PTQ. 
And since there is no quantization training process in PTQ, this information loss is something that cannot be recovered even by increasing the model capacity of subsequent modules. 
Therefore, we smoothen the loss oscillations by jointly optimizing modules with large capacity differences, thus reducing the final reconstruction loss.

From Theorem~\ref{the:error_cumul} and Corollary~\ref{corollary_lossvar}, it is clear that a small capacity of the later module will aggravate the cumulative effect of the quantization error and thus increase the reconstruction loss sharply. Conversely, a large capacity of the later module will reduce the loss and increase the probability of information loss in the subsequent module. 
Therefore, we hope that the capacities of two adjacent modules are as close as possible.
We construct the capacity difference optimization problem for the model containing $L$ modules based on the capacity metric (CM) as follows,

\begin{equation}
\label{modulecap_opt}
\argmin_{\textbf{m}} \sum_{l=1}^{L-1} {\left ({CM}_l-{CM}_{l+1} \right )}^2m_l+\lambda{ \left(\textbf{m}\cdot\mathbbm{1}-k \right ) }^2, 
\end{equation}
where $\textbf{m}\in\mathbb{R}^{l-1}$ is a binary mask vector. When $m_i$=1, it means we perform joint optimization for the $i$-th and $(i+1)$-th modules. $\mathbbm{1}\in\mathbb{R}^{l-1}$ is a vector with all elements of 1. $k$ is a hyper-parameter that controls the number of jointly optimized modules. $\lambda$ controls the importance ratio of the regularization term and the squared capacity difference optimization objective.

We calculate the square of the difference in capacities of all $L-1$ kinds of adjacent modules for ranking and select the top $k$ adjacent modules for joint optimization. We use ModCap and reconstruction loss as our capacity metrics in the data-free and data-dependent scenarios, respectively. Details are shown in Fig.~\ref{fig:overview}. The mixed reconstruction granularity scheme obtained according to the ModCap metric is computationally efficient because it does not involve the reconstruction. However, once we combine a pair of adjacent modules, this combined module cannot compare ModCap for further combination because it is not topologically homogeneous with the adjacent module. Therefore, this optimization scheme can only obtain the local optimal solution for the mixed reconstruction granularity.

On the other hand, according to Corollary~\ref{corollary_lossvar}, the ModCap difference is positively correlated with the difference in reconstruction loss, we can consider the reconstruction loss itself as a capacity metric. This scheme can get the global optimal solution, but it requires a PTQ reconstruction to obtain the reconstruction loss, which is relatively inefficient.

\textbf{Batch size of calibration data.} PTQ requires a small portion of the dataset to do calibration of the quantization parameters and model reconstruction. We note that Eq.~\ref{eq:definition} contains the expectation of the squared Frobenius norm. The expectation represents the average of the random variables, and we take the average of a sampled batch to approximate the expectation in the reconstruction process. The law of large numbers \cite{hsu1947complete} proves that the mean of samples is infinitely close to the expectation when the sample size $N$ tends to infinity, \emph{i.e.},

\begin{equation}
\label{eq:expectation}
\begin{aligned}
&\lim_{N \to +\infty} \frac{1}{N}\sum_{m=1}^N {\Vert f_i^{(n)}(W_i, X_i^{(m)})- f_i^{(n)}(\widetilde{W}_i, \widetilde{X}_i^{(m)})\Vert}_F^2 \\
&=\mathbb{E}\left [ {\Vert f_i^{(n)}(W_i, X_i)- f_i^{(n)}(\widetilde{W}_i, \widetilde{X}_i)\Vert}_F^2 \right ], 
\end{aligned}
\end{equation}

Our experiments show that expanding the batch size of calibration data can improve the accuracy of PTQ. And this trend shows the diminishing marginal utility. Specifically, the improvement of PTQ accuracy slows down as the batch size increases. Details are shown in Sec.~\ref{Ablation_Study}.

\begin{table*}[t!]
\centering
\setlength{\tabcolsep}{1.9mm}{
\begin{tabular}{cccccccc}
\toprule  %添加表格头部粗线
Methods & 
W/A &
Res$18$&
Res$50$&
MBV$2$$\times1.0$ &
MBV$2$$\times0.75$ &
MBV$2$$\times0.5$ &
MBV$2$$\times0.35$ \\
\midrule  %添加表格中横线

Full Prec. & $32$/$32$ & $71.01$  & $76.63$ & $72.20$ & $69.95$ & $64.60$ & $60.08$\\
\midrule  %添加表格中横线
ACIQ-Mix \cite{banner2019post} & $4$/$4$ & $67.00$  & $73.80$ & - & - & - & -\\
ZeroQ \cite{cai2020zeroq} & $4$/$4$ & $21.71$  & $2.94$ & $26.24$ & - & - & - \\
LAPQ \cite{nahshan2021loss} & $4$/$4$ & $60.30$  & $70.00$ & $49.70$ & - & - & -\\
AdaQuant \cite{hubara2020improving} & $4$/$4$ & $69.60$  & $75.90$ & $47.16$ & - & - & -\\
Bit-Split \cite{wang2020towards} & $4$/$4$ & $ 67.56$  & $73.71$ & - & - & - & -\\
AdaRound \cite{nagel2020up} & $4$/$4$ & $67.96$  & $73.88$ & $61.52$ & $55.32$ & $40.71$ & $35.13$\\
$\text{BRECQ}^{*}$ \cite{li2021brecq} & $4$/$4$ & $68.69$  & $74.88$ & $67.51$ & $62.94$ & $53.02$ & $48.88$\\
\rowcolor{gray!10} Ours+BRECQ & $4$/$4$ & \textbf{69.06 \footnotesize \textcolor{red}{(+0.37)}}  & $74.84$ & \textbf{68.56 \footnotesize \textcolor{red}{(+1.05)}} & \textbf{64.55 \footnotesize \textcolor{red}{(+1.61)}} & \textbf{55.26 \footnotesize \textcolor{red}{(+2.24)}} & \textbf{50.67 \footnotesize \textcolor{red}{(+1.79)}}\\
QDROP \cite{wei2022qdrop} & $4$/$4$ & $69.10$  & $75.03$ & $67.89$ & $63.26$ & $54.19$ & $49.79$\\
\rowcolor{gray!10} Ours+QDROP & $4$/$4$ & \textbf{69.46 \footnotesize \textcolor{red}{(+0.36)}}  & \textbf{75.35 \footnotesize \textcolor{red}{(+0.32)}} & \textbf{68.84 \footnotesize \textcolor{red}{(+0.95)}} & \textbf{64.39 \footnotesize \textcolor{red}{(+1.13)}} & \textbf{55.64 \footnotesize \textcolor{red}{(+1.45)}} & \textbf{50.94 \footnotesize \textcolor{red}{(+1.15)}}\\
\midrule  %添加表格中横线
LAPQ \cite{nahshan2021loss} & $2$/$4$ & $0.18$  & $0.14$ &$0.13$ & - & - & -\\
AdaQuant \cite{hubara2020improving} & $2$/$4$ & $0.11$  & $0.12$ & $ 0.15$ & - & - & -\\
AdaRound \cite{nagel2020up} & $2$/$4$ & $62.12$  & $66.11$ & $36.31$ & $25.58$ & $15.12$ & $12.46$\\
$\text{BRECQ}^{*}$ \cite{li2021brecq} & $2$/$4$ & $63.71$  & $68.55$ & $52.30$ & $47.14$ & $34.55$ & $30.80$\\
\rowcolor{gray!10} Ours+BRECQ & $2$/$4$ & \textbf{65.61 \footnotesize \textcolor{red}{(+1.9)}}  & \textbf{70.04 \footnotesize \textcolor{red}{(+1.49)}} & \textbf{58.49 \footnotesize \textcolor{red}{(+6.19)}} & \textbf{52.50 \footnotesize \textcolor{red}{(+5.36)}} & \textbf{41.16 \footnotesize \textcolor{red}{(+6.61)}} & \textbf{35.46 \footnotesize \textcolor{red}{(+4.66)}}\\
QDROP \cite{wei2022qdrop} & $2$/$4$ & $64.66$ & $70.08$ & $52.92$ & $49.00$ & $37.13$ & $32.37$\\
\rowcolor{gray!10} Ours+QDROP & $2$/$4$ & \textbf{66.18 \footnotesize \textcolor{red}{(+1.52)}} & \textbf{70.53 \footnotesize \textcolor{red}{(+0.45)}} & \textbf{57.85 \footnotesize \textcolor{red}{(+4.93)}} & \textbf{53.71 \footnotesize \textcolor{red}{(+4.71)}} & \textbf{40.09 \footnotesize \textcolor{red}{(+2.96)}} & \textbf{35.85 \footnotesize \textcolor{red}{(+3.48)}}\\
\midrule  %添加表格中横线
AdaQuant \cite{hubara2020improving} & $3$/$3$ & $60.09$ & $67.46$ & $2.23$ & - & - & -\\
$\text{AdaRound}^{*}$ \cite{nagel2020up} & $3$/$3$ & $63.91$ & $64.85$ & $34.55$ & $18.16$ & $8.13$ & $4.45$\\
$\text{BRECQ}^{*}$ \cite{li2021brecq} & $3$/$3$ & $64.83$ & $70.06$ & $52.03$ & $45.54$ & $29.79$ & $25.52$\\
\rowcolor{gray!10} Ours+BRECQ & $3$/$3$ & \textbf{65.64 \footnotesize \textcolor{red}{(+0.81)}} & \textbf{70.68 \footnotesize \textcolor{red}{(+0.62)}} & \textbf{57.14 \footnotesize \textcolor{red}{(+5.11)}} & \textbf{50.21 \footnotesize \textcolor{red}{(+4.67)}} & \textbf{35.11 \footnotesize \textcolor{red}{(+5.32)}} & \textbf{30.26 \footnotesize \textcolor{red}{(+4.74)}} \\
QDROP \cite{wei2022qdrop} & $3$/$3$ & $65.56$ & $71.07$ & $54.27$ & $49.26$ & $35.14$ & $29.40$\\
\rowcolor{gray!10} Ours+QDROP & $3$/$3$ & \textbf{66.30 \footnotesize \textcolor{red}{(+0.74)}} & \textbf{71.92 \footnotesize \textcolor{red}{(+0.85)}} & \textbf{58.40 \footnotesize \textcolor{red}{(+4.13)}} & \textbf{51.78 \footnotesize \textcolor{red}{(+2.52)}} & \textbf{38.43 \footnotesize \textcolor{red}{(+3.29)}} & \textbf{32.96 \footnotesize \textcolor{red}{(+3.56)}}\\
\midrule  %添加表格中横线
$\text{BRECQ}^{*}$ \cite{li2021brecq} & $2$/$2$ & $46.89$ & $40.18$ & $7.03$ & $5.60$ & $1.87$ & $1.62$\\
\rowcolor{gray!10} Ours+BRECQ & $2$/$2$ & \textbf{52.02 \footnotesize \textcolor{red}{(+5.13)}} & \textbf{43.72 \footnotesize \textcolor{red}{(+3.54)}} & \textbf{13.84 \footnotesize \textcolor{red}{(+6.81)}} & \textbf{9.46 \footnotesize \textcolor{red}{(+3.86)}} & \textbf{3.43 \footnotesize \textcolor{red}{(+1.56)}} & \textbf{3.22 \footnotesize \textcolor{red}{(+1.6)}}\\
QDROP \cite{wei2022qdrop} & $2$/$2$ & $51.14$ & $54.74$ & $8.46$ & $8.67$ & $3.31$ & $2.77$\\
\rowcolor{gray!10} Ours+QDROP & $2$/$2$ & \textbf{54.46 \footnotesize \textcolor{red}{(+3.32)}} & \textbf{56.82 \footnotesize \textcolor{red}{(+2.08)}} & \textbf{14.44 \footnotesize \textcolor{red}{(+5.98)}} & \textbf{11.40 \footnotesize \textcolor{red}{(+2.73)}} & \textbf{4.18 \footnotesize \textcolor{red}{(+0.87)}} & \textbf{3.09 \footnotesize \textcolor{red}{(+0.32)}}\\
\bottomrule %添加表格底部粗线
\end{tabular}
}
\caption{A comparison of our algorithm with the State-Of-The-Art method. We combine our algorithm with two latest PTQ algorithms, BRECQ and QDROP, and show significant improvements on a wide range of models. ``*" represents our reproduction of the algorithm based on an open-source codebase in a uniform experimental setup. ``W/A" represents the bit width of the weights and activations, respectively. ``Full Prec." is the full precision pre-trained model. Under different bit configurations, we show in the table the comparison of our algorithm with a wide range of PTQ methods on ResNet-$18$, ResNet-$50$ and different scaled MobileNetV$2$.}
\label{tab:PTQ}
\end{table*}

\section{Experiments}

In this section, we perform a series of experiments to verify the superiority of our algorithm. First, we introduce the implementation details of our experiments. Second, we compare our algorithm quantized to different low bit-widths with the State-Of-The-Art on ImageNet datasets. Finally, we design a variety of ablation experiments to comprehensively analyze the properties of our algorithm, including the Pareto optimum subject to model size, the respective contributions of MRECG and expanded batch size, the phenomenon of diminishing marginal utility of expanded batch size, and loss distribution of different algorithms.

\subsection{Implementation Details}

We validate the performance of our algorithm on the ImageNet dataset \cite{russakovsky2015imagenet}, which consists of $1.2$M training images and $50,000$ validation images. We take $16$ batches of data to perform the reconstruction of PTQ. We fix the batch size to $256$ for ResNet-$18$ and MobileNetV$2$, and $128$ for ResNet-$50$. In addition, our experimental analysis of the expanded batch size is shown in Sec.~\ref{Ablation_Study}. Our data preprocessing follows \cite{he2016deep} as everyone does. We use the Pytorch library \cite{paszke2019pytorch} to complete our algorithm. We perform our experiments on an Nvidia Tesla A100 as well as an Intel(R) Xeon(R) Platinum 8336C CPU.

For PTQ reconstruction, our weight rounding scheme follows \cite{nagel2020up}. The rest of our reconstruction hyper-parameters, such as reconstruction iterations, loss ratios, etc., are consistent with Adaround \cite{nagel2020up}, BRECQ \cite{li2021brecq} and QDrop \cite{wei2022qdrop}. Also, as mentioned in QDrop, BRECQ additionally relaxes the bit-width of the first layer output to $8$ bit, which somehow brings some accuracy gains. We re-executed BRECQ in an experimental setup uniform with all algorithms and obtained the accuracy under different bit configurations of models. Please refer to the supplementary material for other hyper-parameter settings.

\subsection{ImageNet Classification}

% We verified our algorithms on ResNet-$18$, ResNet-$50$ and MobileNetV$2$. Note that since BRECQ additionally keeps the output of the first layer at $8$ bit and AdaRound does not officially give the accuracy of the corresponding bit-width, we re-executed these algorithms in our framework, denoted by $*$. Experiments show that our algorithms show superior performance for a wide range of models with different bit-widths. Specifically, we achieve xx\% on MobileNetV$2$ with $4/4$ bit, which is yy\% higher than QDrop with the same bit-width.
We validated our algorithm on a wide range of models, including ResNet-$18$, ResNet-$50$ and MobileNetV$2$ at different scales. As shown in Tab.~\ref{tab:PTQ}, our algorithm outperforms other methods by a large margin on a wide range of models corresponding to different bit configurations. Specifically, we achieve 66.18\% and 57.85\% Top-$1$ accuracy on ResNet-$18$, MobileNetV$2$ with $2$/$4$ bit, which are 1.52\% and 4.93\% higher than SOTA, respectively. Secondly, our method shows strong superiority at lower bits. For example, the accuracy gain of $2$/$4$ bit PTQ quantization on MobileNetV$2$$\times$$0.75$ is significantly higher than that of $4$/$4$ bit (5.36\% vs. 1.61\%). The oscillation problem may be more significant because the low bit causes an increase in the magnitude of PTQ reconstruction error. So smoothing this oscillation at low bit quantization is crucial to the PTQ optimization process. Also, our method shows considerable improvement on MobileNetV$2$ at different scales, implying that the model size does not limit the performance of our algorithm. Besides, we notice that our algorithm is more effective for MobileNetV$2$. Through our observations, the oscillation problem in MobileNetV$2$ is more severe than that of the ResNet family of networks. The deep separable convolution increases the difference in module capacity and thus makes the magnitude of the loss oscillation larger. Therefore, our algorithm achieves a considerable performance improvement by solving the oscillation problem in MobileNetV$2$.

\begin{figure}[t]
\centering
\includegraphics[width=1\linewidth]{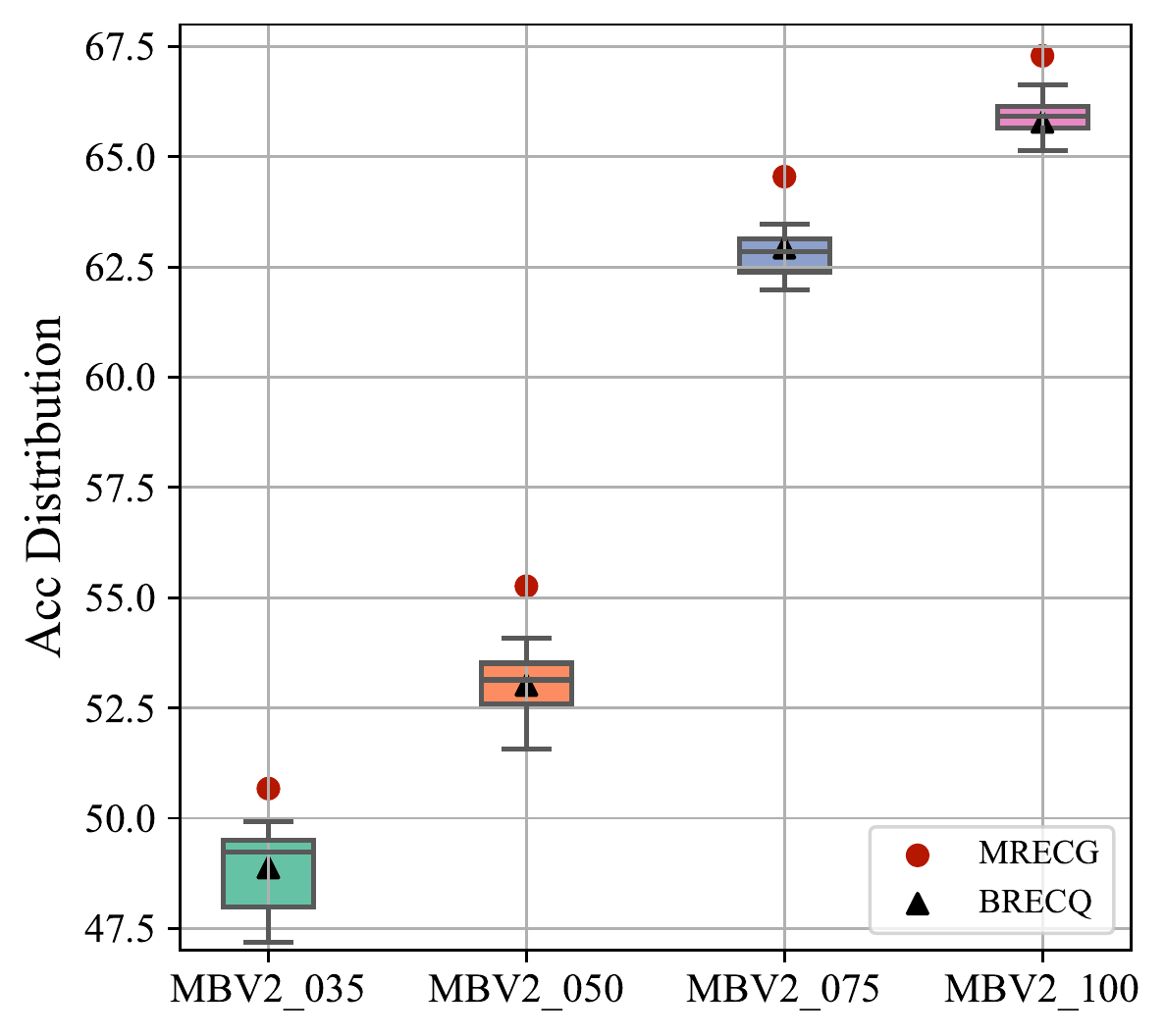}
\caption{Pareto optimality of MRECG. We randomly sampled a large number of mixed reconstruction granularity schemes on four scales of MobileNetV$2$. We combine these schemes with the BRECQ algorithm to recover accuracy. The accuracy distribution of the sampled schemes on different models is given. Also, we mark the accuracy values of MRECG and BRECQ, respectively.}
\label{fig:random_sample}
\end{figure}

\begin{table}[!t]
\centering
\setlength{\tabcolsep}{1.8mm}{
\begin{tabular}{cc}
\toprule  %添加表格头部粗线
Methods & 
Top-$1$ Acc(\%)\\
\midrule  %添加表格中横线
Baseline   & $50.83$ \\
\midrule  %添加表格中横线
Baseline+MRECG   & $51.91$ \\
Baseline+ExpandBS   & $51.57$ \\
\midrule  %添加表格中横线
Baseline+both(Ours) & $52.25$ \\
\bottomrule %添加表格底部粗线
\end{tabular}}
\caption{Top-1 accuracy (\%) for different component combinations. Baseline is $0.5$ scaled MobileNetV$2$ quantized to $4/4$ bit using the BRECQ algorithm. MRECG is mixed reconstruction granularity. ExpandBS represents expanded batch size.}
\label{tab:component_combinations}
\end{table}

\begin{figure}[t]
\centering
\includegraphics[width=1\linewidth]{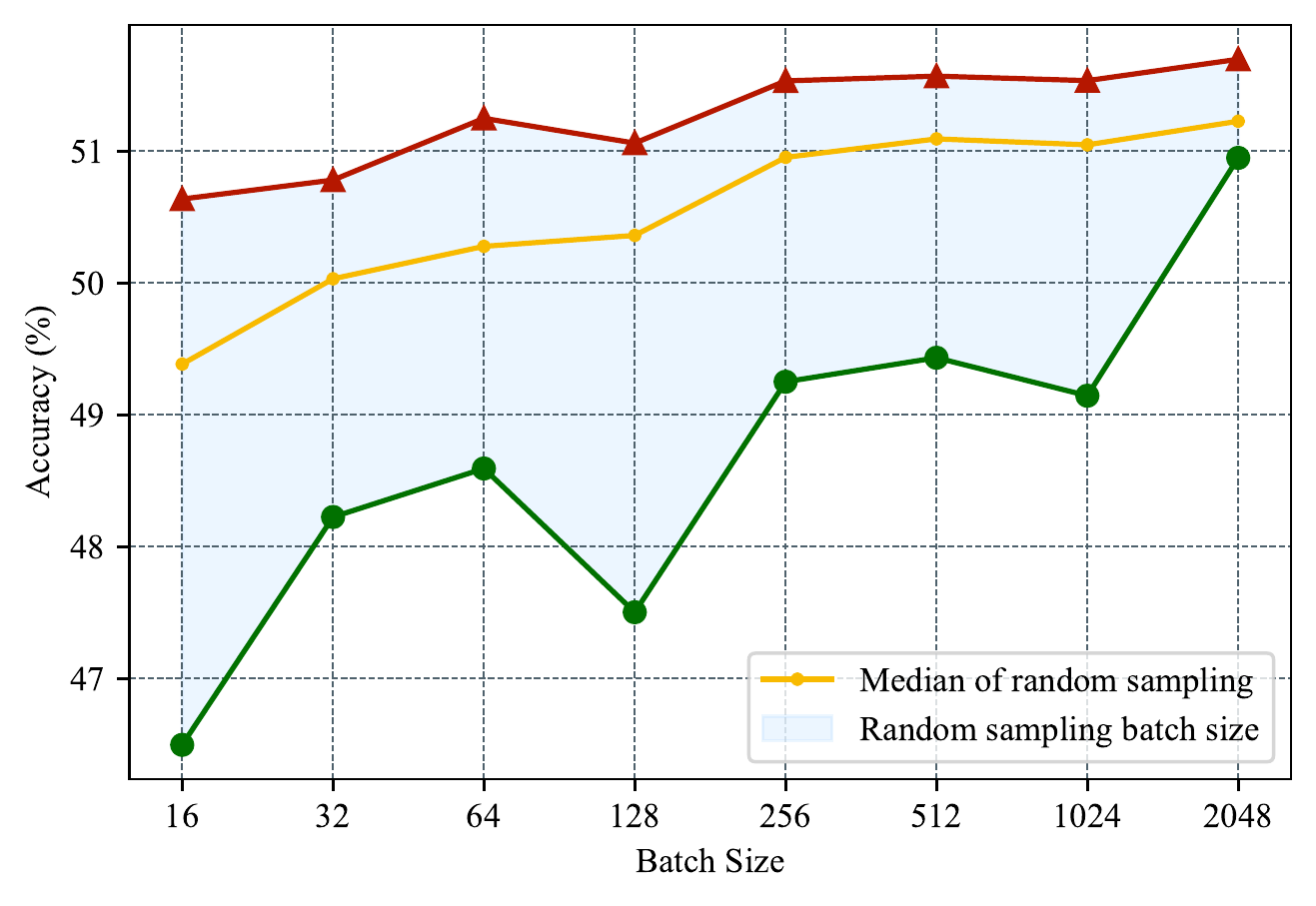}
\caption{Impact of expanding batch size on PTQ accuracy. We sample some batch sizes on $4$/$4$ bit MobileNetV$2$ and recover the accuracy. We demonstrate the median of sampled accuracy as the batch size increases to prevent the effect of outlier points.}
\label{fig:random_sample_bs}
\end{figure}

\subsection{Ablation Study}
\label{Ablation_Study}

\noindent
\textbf{Pareto optimal.} We randomly sample a large number of mixed reconstruction granularity schemes on different scaled MobileNetV$2$ and optimize these schemes with $4$/$4$ bit PTQ using the BRECQ algorithm to obtain the sampling accuracy. We plot the accuracy distribution of these schemes in Fig.~\ref{fig:random_sample}. Also we mark the accuracies of MRECG and BRECQ algorithms on MobileNetV$2$ at different scales, respectively. From Fig.~\ref{fig:random_sample}, we can observe that MRECG has stable accuracy improvement on MobileNetV$2$ at all scales. Overall, MRECG achieves the Pareto optimal state under the limitation of model size.

\noindent
\textbf{Component Contribution.} Here we examine how much mixed reconstruction granularity and expanded batch size contribute to accuracy, respectively. We take the $0.5$ scaled MobileNetV$2$ quantized to $4/4$ bit by BRECQ as the baseline. As shown in Tab.~\ref{tab:component_combinations}, the mixed reconstruction granularity and expanded batch size have $1.08\%$ and $0.74\%$ accuracy improvement, respectively. Meanwhile, the combination of the two methods further boosts model performance.

\noindent
\textbf{Diminishing marginal utility of expanding batch size.} By the law of large numbers \cite{hsu1947complete}, when the number of samples increases, the mean of samples converges to the expectation in Eq.~\ref{eq:definition} thus yielding a smaller approximation error. However, when the sample size is large enough, the accuracy gain from the reduction of the approximation error is negligible. In other words, expanding batch size presents a trend of diminishing marginal utility. In Fig.~\ref{fig:random_sample_bs}, we randomly sample some batch sizes and obtained PTQ accuracy of $4/4$ bit by BRECQ on $0.5$ scaled MobileNetV$2$. We demonstrate the median of the sampling accuracy to prevent the effect of outliers. We notice that the sampling accuracy fluctuates more when the batch size is small, implying that the approximation error of a smaller batch size generates larger noise. This situation is moderated as the batch size is expanded. In addition, we observe an increase and then stabilization of the median accuracy, implying that expanding the batch size can bring accuracy gains to PTQ. Meanwhile, this gain is constrained by the diminishing marginal utility.

\noindent
\textbf{Loss distribution.} As in Fig.~\ref{fig:all_loss_dis}, we present the loss distribution of different algorithms. From the distribution of Adaround we can see that it has the largest oscillation amplitude. Therefore, in the deeper layers of the model, the reconstruction loss of Adaround increases rapidly. In addition, BRECQ uses block reconstruction for PTQ optimization, which somehow alleviates the loss oscillation problem and therefore brings performance improvement. Our mixed reconstruction granularity smoothes out the loss oscillations by jointly optimizing adjacent modules with large differences in capacity. It can be seen that the loss variation of MRECG is more stable compared to Adaround and BRECQ. Moreover, when comparing the MRECG loss in different scenarios, we find that the MRECG obtained based on the data-free scenario still has a small amplitude loss oscillation. We believe that this modest oscillation is related to the fact that the ModCap MRECG can only jointly optimize two adjacent modules, which can only achieve local optimality. In contrast, the loss MRECG has the smoothest loss curve. However, this global optimum comes at the cost of prolonging the time to obtain the quantized model.

\begin{figure}[t]
\centering
\includegraphics[width=1\linewidth]{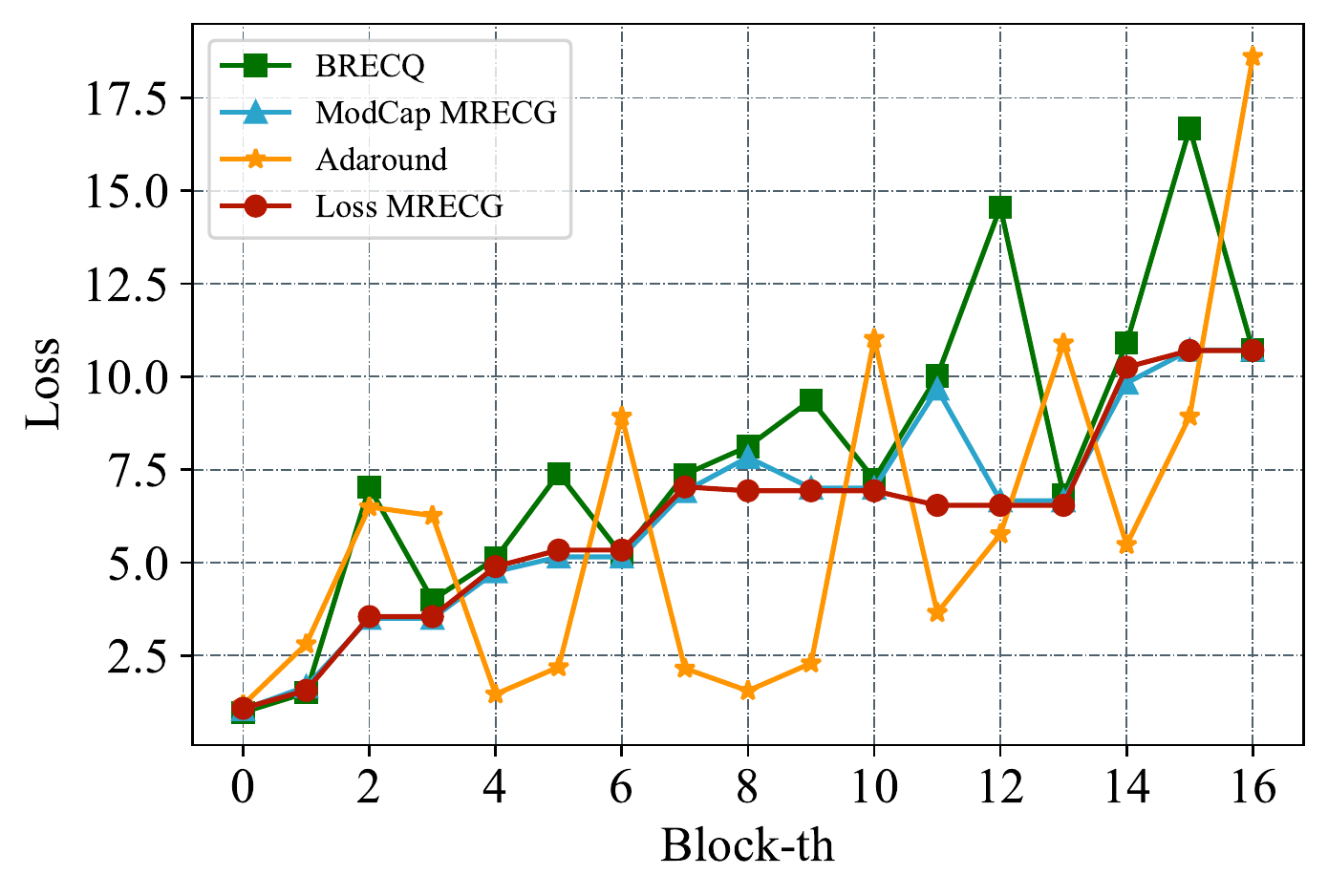}
\caption{The reconstruction loss distributions of different algorithms for $4$/$4$ bit configurations on $0.5$ scaled MobileNetV$2$, including Adaround, BRECQ, and MRECG constructed using ModCap and Loss metric for data-free or data-dependency scenarios. we omit the final fully connected layer loss for clarity.}
\label{fig:all_loss_dis}
\end{figure}

\section{Conclusion}

In this paper, we discover an oscillation problem in the PTQ optimization process for the first time, which has been neglected in all previous PTQ algorithms. We then theoretically analyze that this oscillation is caused by the difference in the capacity of adjacent modules. Meanwhile, we observe that the final reconstruction error is positively correlated with the largest value of reconstruction error. Therefore, we construct a mixed reconstruction granularity optimization problem to smoothen the loss oscillations by jointly optimizing the adjacent modules with large capacity differences in data-dependent and data-free scenarios, which reduces the final reconstruction error. In addition, we observe that increasing the batch size of calibration data can reduce the expectation approximation error of the objective function. And this gain is of diminishing marginal utility. We validate the effectiveness of our method on extensive models with different bit configurations. 
% In the future, we will attempt to quantify the effects of residual inputs and the number of groups on the module capacity in the PTQ.

% \noindent
% \textbf{Limitation.} Our analysis is based on the limitations of the module topological homogeneity. This is because some factors affecting the module capacity are difficult to quantify, \emph{e.g.}, residual inputs, number of convolution groups, etc. Therefore, we must exclude the influence of these factors to make the capacities comparable between modules. Otherwise, the incomparable capacity of the modules will render the discussion of Corollary~\ref{corollary_lossvar} meaningless. In addition, we note that our algorithm does not improve significantly on ResNet-$18$ and ResNet-$50$ with $4$/$4$ bit. First, the loss oscillation of the ResNet family of models is less dramatic. Second, the $4$/$4$ bit ResNet is already close to the performance of the full precision model after PTQ optimization, and the optimization space is very limited.

\section*{Acknowledgments}
This work was supported by National Key R\&D Program of China (No.2022ZD0118202), the National Science Fund for Distinguished Young Scholars (No.62025603), the National Natural Science Foundation of China (No. U21B2037, No. U22B2051, No. 62176222, No. 62176223, No. 62176226, No. 62072386, No. 62072387, No. 62072389, No. 62002305 and No. 62272401), and the Natural Science Foundation of Fujian Province of China (No.2021J01002,  No.2022J06001), Guangdong Basic and Applied Basic Research Foundation (No. 2019B1515120049), CAAI-Huawei MindSpore Open Fund.

% \clearpage

%%%%%%%%% REFERENCES
{\small
\bibliographystyle{ieee_fullname}
\bibliography{egbib}
}

\clearpage
\section*{Supplementary Material}

\appendix

\section{Proof of Theorem 1}
\label{proof:Theorem_1}

\begin{Theorem}
\label{the:error_cumul} 
Given a pre-trained model and input data. If two adjacent modules are equivalent, we have,
\begin{equation}\label{eq:inequality}
	 \mathcal{L}(W_i, X_i) \leq \mathcal{L}(W_{i+1}, X_{i+1}). 
\end{equation}

\end{Theorem}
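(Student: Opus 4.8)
The plan is to exploit that $X_{i+1}$ and $\widetilde X_{i+1}$ are exactly the clean and the quantized-path outputs of module $i$, namely $X_{i+1}=f_i^{(n_i)}(W_i,X_i)$ and $\widetilde X_{i+1}=f_i^{(n_i)}(\widetilde W_i,\widetilde X_i)$, the latter also absorbing the activation quantizer that sits between the two modules. With this identification the module-$i$ reconstruction loss is precisely the mean squared perturbation carried into module $i+1$: $\mathcal{L}(W_i,X_i)=\mathbb{E}\,\|X_{i+1}-\widetilde X_{i+1}\|_F^2$. Hence the claimed inequality reduces to showing that pushing this perturbation through module $i+1$ and then additionally replacing $W_{i+1}$ by $\widetilde W_{i+1}$ cannot decrease the mean squared error. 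Throughout I abbreviate $f_j^{(n_j)}$ by $f_j$.

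First I would decompose the module-$(i+1)$ error as $e_{\mathrm{prop}}+e_{\mathrm{fresh}}$ by inserting the intermediate quantity $f_{i+1}(W_{i+1},\widetilde X_{i+1})$, where
\begin{equation*}
\begin{aligned}
&e_{\mathrm{prop}}:=f_{i+1}(W_{i+1},X_{i+1})-f_{i+1}(W_{i+1},\widetilde X_{i+1}),\\
&e_{\mathrm{fresh}}:=f_{i+1}(W_{i+1},\widetilde X_{i+1})-f_{i+1}(\widetilde W_{i+1},\widetilde X_{i+1}).
\end{aligned}
\end{equation*}
Then $\mathcal{L}(W_{i+1},X_{i+1})=\mathbb{E}\|e_{\mathrm{prop}}\|_F^2+\mathbb{E}\|e_{\mathrm{fresh}}\|_F^2+2\,\mathbb{E}\langle e_{\mathrm{prop}},e_{\mathrm{fresh}}\rangle$. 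Modeling the weight-rounding perturbation as a zero-mean quantity uncorrelated with the activation signal and with the errors accumulated upstream — the modeling that also underlies the second-order analyses of AdaRound and BRECQ \cite{nagel2020up,li2021brecq} — makes the cross term vanish, so $\mathcal{L}(W_{i+1},X_{i+1})=\mathbb{E}\|e_{\mathrm{prop}}\|_F^2+\mathbb{E}\|e_{\mathrm{fresh}}\|_F^2\geq\mathbb{E}\|e_{\mathrm{prop}}\|_F^2$, the fresh term being nonnegative.

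It then remains to prove $\mathbb{E}\|e_{\mathrm{prop}}\|_F^2\geq\mathbb{E}\|X_{i+1}-\widetilde X_{i+1}\|_F^2=\mathcal{L}(W_i,X_i)$, i.e. that module $i+1$ does not contract the incoming perturbation. This is exactly where module equivalence enters: because module $i+1$ has the same topology as module $i$ — identical layer count and identical hyper-parameters, in particular the same residual/identity branch — the identity path alone prevents shrinkage. Writing a residual module as $f_{i+1}(W,X)=X+r(W,X)$ gives $e_{\mathrm{prop}}=(X_{i+1}-\widetilde X_{i+1})+\bigl(r(W_{i+1},X_{i+1})-r(W_{i+1},\widetilde X_{i+1})\bigr)$, and, after the same zero-cross-term step, $\mathbb{E}\|e_{\mathrm{prop}}\|_F^2\geq\mathbb{E}\|X_{i+1}-\widetilde X_{i+1}\|_F^2$; in the non-residual case one instead invokes that the Jacobian of the equivalent map $f_{i+1}$ at the relevant operating point has smallest singular value at least one on the perturbation subspace. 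Combining with the previous paragraph yields $\mathcal{L}(W_{i+1},X_{i+1})\geq\mathbb{E}\|X_{i+1}-\widetilde X_{i+1}\|_F^2=\mathcal{L}(W_i,X_i)$, which is the statement.

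The step I expect to be the main obstacle is this last non-contractivity estimate: in full generality a layer can attenuate a perturbation, so the inequality genuinely requires a structural ingredient, and the proof must make explicit what "equivalence" is taken to supply — a residual/skip connection, batch-norm rescaling, or an explicit lower Lipschitz bound — rather than treating it as automatic. A secondary, purely bookkeeping, point is the activation quantizer between modules $i$ and $i+1$: it should be folded into the definition of $\widetilde X_{i+1}$ and then accounted for as one more independent zero-mean error contribution, which only enlarges the left-hand side and hence leaves the direction of the inequality unchanged.
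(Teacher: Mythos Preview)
Your route is genuinely different from the paper's. You split the module-$(i{+}1)$ error as $e_{\mathrm{prop}}+e_{\mathrm{fresh}}$ and then try to show that $f_{i+1}$ does not contract the incoming perturbation, reading ``equivalence'' as supplying a residual branch or a lower Lipschitz bound. The paper instead argues by induction on the number of convolutional layers in a module. In the one-layer base case it writes $\mathcal{L}\approx\mathbb{E}\|W\Delta X+\Delta W\,X\|_F^2$, expands the square, and uses equivalence \emph{together with batch normalization} to argue that the full-precision $W_i,X_i$ and $W_{i+1},X_{i+1}$ are identically distributed; the inequality is then reduced to the bare monotonicity $\mathbb{E}[\Delta x_{i+1}]>\mathbb{E}[\Delta x_i]$ and $\mathbb{E}[\Delta w_{i+1}]>\mathbb{E}[\Delta w_i]$, which is invoked as the ``accumulative effect of quantization errors'' rather than derived. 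The inductive step then wraps one more convolution around an already-assumed inequality on the inner output difference. So the paper does not solve your non-contractivity obstacle --- it substitutes a different, directly assumed, monotonicity of per-layer error moments, with equivalence playing the role of matching the clean-signal distributions, not of providing a skip path.

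One concrete weak spot in your sketch: the second ``zero-cross-term step'' in the residual case is not of the same nature as the first. In the $e_{\mathrm{prop}}/e_{\mathrm{fresh}}$ split the two pieces depend on statistically distinct perturbations ($\Delta X_{i+1}$ versus $\Delta W_{i+1}$), so a decorrelation assumption is defensible. In the residual expansion both summands, $X_{i+1}-\widetilde X_{i+1}$ and $r(W_{i+1},X_{i+1})-r(W_{i+1},\widetilde X_{i+1})$, are deterministic functions of the \emph{same} incoming perturbation and can be arbitrarily (anti\nobreakdash-)correlated, so that cross term need not vanish. Note also that the paper's base case is a single convolutional layer with no skip connection, so ``equivalence $\Rightarrow$ residual'' does not cover the setting; you would be forced onto the Jacobian-singular-value assertion there, which the paper sidesteps via the distributional argument above.
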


\begin{proof}
When two adjacent equivalent modules contain only one convolutional layer, that is, 
\begin{equation}
\begin{split}
\label{eq:one_conv}
&f_{i+1}^{(n_{i+1})}(W_i,X_i) \\
=& f_{i}^{(n_i)}(W_i,X_i) \\
=& f_i^{(1)}(W_i, X_i) \\
=& W_iX_i.
\end{split}
\end{equation}
Where we transform the tensor convolution into a matrix multiplication for simplicity. This transform is common in the practical implementation of convolution and is usually accompanied by the General Matrix Multiplication (GEMM) for the practical speedup \cite{ahmad2020optimizing}. Therefore, 

\begin{equation}
\begin{split}
\label{eq:loss_i+1}
&\mathcal{L}(W_{i+1}, X_{i+1}) \\
= &\mathbb{E}\left [ {\Vert W_{i+1}X_{i+1}- \widetilde{W}_{i+1}\widetilde{X}_{i+1}\Vert}_F^2 \right ] \\
=& \mathbb{E}\left [ {\Vert W_{i+1}X_{i+1}- (W_{i+1}+\Delta W_{i+1})(X_{i+1}+\Delta X_{i+1})\Vert}_F^2 \right ] \\
=& \mathbb{E}\left [ {\Vert W_{i+1}\Delta X_{i+1}+ \Delta W_{i+1}X_{i+1}+\Delta W_{i+1}\Delta X_{i+1}\Vert}_F^2 \right ] \\
\approx & \mathbb{E}\left [ {\Vert W_{i+1}\Delta X_{i+1}+ \Delta W_{i+1}X_{i+1}\Vert}_F^2 \right ], 
\end{split}
\end{equation}
where $\Delta W$ and $\Delta X$ are the quantization errors of $W$ and $X$. We ignore the higher order term $\Delta W\Delta X$ of the quantization error. Then, 

\begin{equation}\label{eq:loss_i+1_continue}
\begin{aligned}
\begin{split}
  &\mathbb{E}\left [ {\Vert W_{i+1}\Delta X_{i+1}+ \Delta W_{i+1}X_{i+1}\Vert}_F^2 \right ] \\
  =& \mathbb{E} \left [ 
\sum_m \sum_n {\left ( w_{i+1}^{(m)}\Delta x_{i+1}^{(n)}+\Delta w_{i+1}^{(m)}x_{i+1}^{(n)} \right )}^2 \right ] \\
=& \mathbb{E} \left [ 
\sum_m \sum_n {\left ( w_{i+1}^{(m)}\Delta x_{i+1}^{(n)} \right )}^2  \right ] + \mathbb{E} \left [ 
\sum_m \sum_n {\left ( \Delta w_{i+1}^{(m)}x_{i+1}^{(n)} \right )}^2  \right ] \\
&+ 2\mathbb{E} \left [ 
\sum_m \sum_n  w_{i+1}^{(m)}\Delta x_{i+1}^{(n)}\Delta w_{i+1}^{(m)}x_{i+1}^{(n)}   \right ] \\
=&  
\sum_m \sum_n \mathbb{E} \left [ {\left ( w_{i+1}^{(m)}\Delta x_{i+1}^{(n)} \right )}^2  \right ] + 
\sum_m \sum_n \mathbb{E} \left [ {\left ( \Delta w_{i+1}^{(m)}x_{i+1}^{(n)} \right )}^2  \right ] \\
&+ 2\sum_m \sum_n  \mathbb{E} \left [ w_{i+1}^{(m)}\Delta x_{i+1}^{(n)}\Delta w_{i+1}^{(m)}x_{i+1}^{(n)}   \right ], \\
\end{split}
\end{aligned}
\end{equation}
where $w_{i+1}^{(m)}$ is the $m$-th row vector of $W$, $x_{i+1}^{(n)}$ is the $n$-th column vector of $X$, and others as well. Since the pre-trained model and the input are given, $w_{i+1}^{(m)}$ and $x_{i+1}^{(n)}$ are constant vectors in the PTQ optimization process, so that,

\begin{equation}\label{eq:equivalent}
\begin{aligned}
\begin{split}
\mathbb{E} \left [ {\left ( w_{i+1}^{(m)}\Delta x_{i+1}^{(n)} \right )}^2  \right ] &= {\left ( w_{i+1}^{(m)}\mathbb{E} \left [ \Delta x_{i+1}^{(n)} \right ] \right )}^2, \\
\mathbb{E} \left [ {\left ( \Delta w_{i+1}^{(m)}x_{i+1}^{(n)} \right )}^2  \right ] &=  {\left ( \mathbb{E} \left [\Delta w_{i+1}^{(m)}\right ]x_{i+1}^{(n)} \right )}^2  , \\
\mathbb{E} \left [ w_{i+1}^{(m)}\Delta x_{i+1}^{(n)}\Delta w_{i+1}^{(m)}x_{i+1}^{(n)}   \right ] &= w_{i+1}^{(m)}\mathbb{E} \left [\Delta x_{i+1}^{(n)}\Delta w_{i+1}^{(m)}\right ]x_{i+1}^{(n)} . \\
\end{split}
\end{aligned}
\end{equation}
Since the two adjacent modules are equivalent and each module contains a batchnorm layer \cite{ioffe2015batch}, we thus consider the full precision weights and activations of the two modules to be identically distributed, i.e.,

\begin{equation}\label{eq:identically_distributed}
\begin{aligned}
\begin{split}
{\mathbb{E}}_{W_i\sim \mathcal{D}_w} \left [ w_i^{(m)} \right ] &= {\mathbb{E}}_{W_{i+1}\sim \mathcal{D}_w} \left [ w_{i+1}^{(m)} \right ], \\
{\mathbb{E}}_{X_i\sim \mathcal{D}_x} \left [ x_i^{(n)} \right ] &= {\mathbb{E}}_{X_{i+1}\sim \mathcal{D}_x} \left [ x_{i+1}^{(n)} \right ]. \\
\end{split}
\end{aligned}
\end{equation}
Meanwhile, due to the accumulative effect of quantification errors, we have, 

\begin{equation}\label{eq:QE_accumulative_effect}
\begin{aligned}
\begin{split}
\mathbb{E} \left [ \Delta x_{i+1}^{(n)} \right ]>\mathbb{E} \left [ \Delta x_{i}^{(n)} \right ], \mathbb{E} \left [\Delta w_{i+1}^{(m)}\right ]>\mathbb{E} \left [\Delta w_{i}^{(m)}\right ].
\end{split}
\end{aligned}
\end{equation}
Therefore, Eq.~\ref{eq:inequality} holds when the two modules contain one convolutional layer. Subsequently, suppose that Eq.~\ref{eq:inequality} holds when the module contains $n$ convolutional layers, we have, 
\begin{equation}\label{eq:n_conv_loss}
\begin{aligned}
\begin{split}
&\mathbb{E}\left [ { \left \Vert f_{i+1}^{(n)}(W_{i+1}, X_{i+1})- f_{i+1}^{(n)}(\widetilde{W}_{i+1}, \widetilde{X}_{i+1}) \right \Vert}_F^2 \right ] \\
&> \mathbb{E}\left [ { \left \Vert f_{i}^{(n)}(W_{i}, X_{i})- f_{i}^{(n)}(\widetilde{W}_{i}, \widetilde{X}_{i}) \right \Vert}_F^2 \right ]. \\
\end{split}
\end{aligned}
\end{equation}
We set, 
\begin{equation}\label{eq:Equivalent_representation}
\begin{aligned}
\begin{split}
f_{i+1}^{(n)}(W_{i+1},X_{i+1})=X_{i+1}^{(n+1)}, \\
f_{i+1}^{(n)}(\widetilde{W}_{i+1},\widetilde{X}_{i+1})=\widetilde{X}_{i+1}^{(n+1)}. \\
\end{split}
\end{aligned}
\end{equation}
Our supposition is equivalently converted to,
\begin{equation}\label{eq:n_conv_loss}
\begin{aligned}
\begin{split}
\mathbb{E}\left [ { \left \Vert X_{i+1}^{(n+1)}- \widetilde{X}_{i+1}^{(n+1)} \right \Vert}_F^2 \right ] &> \mathbb{E}\left [ { \left \Vert X_{i}^{(n+1)}- \widetilde{X}_{i}^{(n+1)} \right \Vert}_F^2 \right ] \\
\mathbb{E}\left [ { \left \Vert \Delta X_{i+1}^{(n+1)} \right \Vert}_F^2 \right ] &> \mathbb{E}\left [ { \left \Vert \Delta X_{i}^{(n+1)} \right \Vert}_F^2 \right ]. \\
\end{split}
\end{aligned}
\end{equation}
When the module contains $n+1$ convolutional layers, 
\begin{equation}\label{eq:n+1_conv_loss}
\small
\begin{aligned}
\begin{split}
&\mathbb{E}\left [ { \left \Vert f_{i+1}^{(n+1)}(W_{i+1}, X_{i+1})- f_{i+1}^{(n+1)}(\widetilde{W}_{i+1}, \widetilde{X}_{i+1}) \right \Vert}_F^2 \right ] \\
=& \mathbb{E}\left [ { \left \Vert W_{i+1}^{(n+1)}f_{i+1}^{(n)}(W_{i+1}, X_{i+1})- \widetilde{W}_{i+1}^{(n+1)}f_{i+1}^{(n)}(\widetilde{W}_{i+1}, \widetilde{X}_{i+1}) \right \Vert}_F^2 \right ] \\
=& \mathbb{E}\left [ { \left \Vert W_{i+1}^{(n+1)}X_{i+1}^{(n+1)}- \widetilde{W}_{i+1}^{(n+1)}\widetilde{X}_{i+1}^{(n+1)} \right \Vert}_F^2 \right ]. \\
\end{split}
\end{aligned}
\end{equation}
Similar to the derivation of Eqs.~\ref{eq:loss_i+1_continue}-\ref{eq:QE_accumulative_effect}, we can obtain that Eq.~\ref{eq:inequality} also holds when the module contains $n+1$ convolutional layers. Therefore, by inductive reasoning, the Theorem~\ref{the:error_cumul} is proved.

\end{proof}

\section{Proof of Corollary 1}
\label{proof:Corollary_1}

\begin{Corollary}
\label{corollary_lossvar}
Suppose two adjacent modules be topologically homogeneous. If the module capacity of the later module is large enough, the loss will decrease. Conversely, if the latter module capacity is smaller than the former, then the accumulation effect of quantization error is exacerbated.
\end{Corollary}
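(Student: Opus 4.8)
The plan is to reuse the loss decomposition from the proof of Theorem~\ref{the:error_cumul} and to isolate the one new ingredient that topological homogeneity introduces: the parameter-count, and hence ModCap, gap between the two adjacent modules. First I would repeat the derivation of Eqs.~\ref{eq:loss_i+1}--\ref{eq:QE_accumulative_effect} for module $i+1$, but now \emph{without} the equivalence hypothesis, carrying through the induction over the $n_{i+1}$ convolutional layers exactly as in Theorem~\ref{the:error_cumul}. The outcome is again the double sum of Eq.~\ref{eq:loss_i+1_continue} over the output rows $m$ and input columns $n$ of module $i+1$; what changes is that the ranges of $m$ and $n$ — which are fixed by the channel counts and kernel sizes — no longer coincide with those of module $i$. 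By Definition~\ref{topo_defin} these are the \emph{only} discrepancies between the two modules, so the number of summands equals, up to the per-layer bit-width factors $b_j$ and stride-normalisation factors $\alpha_j$ of Eq.~\ref{eq:ModCap}, exactly $\mathrm{ModCap}_{i+1}$ (respectively $\mathrm{ModCap}_i$).

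Next I would read off the two competing contributions. Each summand is controlled, via Eq.~\ref{eq:equivalent}, by the accumulated quantization errors $\mathbb{E}[\Delta x_{i+1}^{(n)}]$ and $\mathbb{E}[\Delta w_{i+1}^{(m)}]$, which by Eq.~\ref{eq:QE_accumulative_effect} strictly exceed their counterparts in module $i$; this is the accumulation effect that pushed the loss \emph{up} in Theorem~\ref{the:error_cumul}. Against it, the number of summands — i.e. the ModCap of module $i+1$ — measures how many adaptive-rounding degrees of freedom the reconstruction has available to reproduce $f_{i+1}(W_{i+1},X_{i+1})$ from the perturbed input $\widetilde X_{i+1}$; adding parameters can only lower the attainable reconstruction loss. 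I would then compare module $i+1$ against the hypothetical module equivalent to module $i$ that Theorem~\ref{the:error_cumul} describes: enlarging kernel size and channels adds such degrees of freedom and can therefore drive $\mathcal{L}(W_{i+1},X_{i+1})$ below $\mathcal{L}(W_i,X_i)$ once the capacity gain outweighs the accumulation term, whereas shrinking them removes degrees of freedom and can only make the accumulation-driven increase worse — in particular whenever $\mathrm{ModCap}_{i+1}<\mathrm{ModCap}_i$. This gives precisely the two regimes in the statement and, en route, the positive correlation between the ModCap difference and the reconstruction-loss difference that Sec.~\ref{subsec:osci} relies on.

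The hard part is the monotonicity used in the third step: turning the intuitive claim ``more rounding variables $\Rightarrow$ smaller minimised reconstruction loss'' into a quantitative bound, which is what actually pins down the ``large enough'' threshold on $\mathrm{ModCap}_{i+1}$. I expect this to require an explicit model of the reconstruction step — e.g. viewing it as a projection of the target output onto the rounding lattice and estimating the lattice approximation error as a decreasing function of the parameter count — together with a mild regularity assumption on the full-precision weights (they already carry a batchnorm layer, as exploited for Eq.~\ref{eq:identically_distributed}). Everything else then follows by combining that bound with the decomposition of Eqs.~\ref{eq:loss_i+1_continue}--\ref{eq:QE_accumulative_effect}.
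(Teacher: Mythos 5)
There is a genuine gap at the load-bearing step of your plan. Your argument hinges on the monotonicity claim ``more rounding variables $\Rightarrow$ smaller minimised reconstruction loss,'' quantified well enough to produce the ``large enough'' threshold on $\mathrm{ModCap}_{i+1}$ — and you explicitly defer exactly that step to an unspecified ``explicit model of the reconstruction step'' (a lattice-projection bound plus regularity assumptions) that is never supplied. Without it, neither regime of the corollary is established: the accumulation effect of Theorem~\ref{the:error_cumul} pushes the loss up, and you have no bound showing when the capacity gain overcomes it. There is also a concrete miscount in your first step: the double sum in the expansion of $\Vert W\Delta X+\Delta W X\Vert_F^2$ ranges over the rows $m$ of $W$ and the columns $n$ of $X$, i.e.\ over entries of the \emph{output} matrix, so the number of summands is $\mathrm{rows}(W)\cdot\mathrm{cols}(X)$, not $\mathrm{params}(W)=\mathrm{rows}(W)\cdot\mathrm{cols}(W)$; identifying it ``up to $b_j$ and $\alpha_j$'' with $\mathrm{ModCap}$ as defined in Eq.~\ref{eq:ModCap} does not go through, and in any case $\mathrm{ModCap}$ is dominated by the bit-width factor $b_i$, which your counting argument ignores entirely.

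For comparison, the paper avoids this machinery with a much more elementary limiting argument. For the first claim it observes that at full capacity (bit-width $32$, no activation quantization) the module incurs zero quantization error, so $\Delta W,\Delta X\to 0$ as $\mathrm{ModCap}$ approaches the full-precision capacity; hence for any $\epsilon_1<\mathcal{L}(W_i,X_i)$ there is a capacity threshold beyond which $\mathcal{L}(W_{i+1},X_{i+1})<\epsilon_1<\mathcal{L}(W_i,X_i)$. For the converse it lets $\mathrm{ModCap}\to 0$, where the quantization errors approach an upper bound $\zeta$, and combines the resulting lower bound $\mathcal{L}'(W_{i+1},X_{i+1})>\zeta-\epsilon_1=\mathcal{L}(W_{i+1},X_{i+1})$ with Theorem~\ref{the:error_cumul} applied to the equal-capacity baseline, yielding $\mathcal{L}'(W_{i+1},X_{i+1})>\mathcal{L}(W_i,X_i)$. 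Your comparison against ``the hypothetical module equivalent to module $i$'' echoes that second half, but the first half of your route (capacity as degrees of freedom for the rounding optimisation) is a genuinely different — and here unproven — mechanism from the paper's (capacity as proximity to lossless quantization). If you want to salvage your route, the missing lemma is precisely the quantitative decrease of the optimal reconstruction error in the parameter count at fixed bit-width; as written, the proposal asserts it rather than proves it.
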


\begin{proof}
We consider an extreme case, where the module is equivalent to no quantization if the bit-width $b_i$ in the ModCap is taken to be $32$ bits and the activation value is not quantized. In this case, the quantization error of this module is $0$. Therefore, when the module capacity is large enough, the quantization error of the module will converge to $0$, i.e.,

\begin{equation}\label{eq:limit_qe}
\small
\begin{aligned}
\begin{split}
\lim_{ModCap \to m} \Delta W = \lim_{ModCap \to m} \Delta X = 0,
\end{split}
\end{aligned}
\end{equation}
where $m$ is the module capacity of the full precision module. $\Delta W, \Delta X$ is the quantization error of the module weights and activation. Therefore, for arbitrary $\epsilon > 0$, there exists a module capacity $ n $ such that $\mathcal{L}(W_{i+1},X_{i+1}) < \epsilon$ when $ModCap>n$. Since $\epsilon$ is arbitrary, we take $\epsilon ={\epsilon}_1< \mathcal{L}(W_i,X_i)$. Consequently, we have 
\begin{equation}\label{ineq:limit_qe}
\small
\begin{aligned}
\begin{split}
\mathcal{L}(W_{i+1},X_{i+1}) < {\epsilon}_1 < \mathcal{L}(W_i,X_i).
\end{split}
\end{aligned}
\end{equation}
Conversely, by Theorem~\ref{the:error_cumul}, when the capacity of the later module is the same as the earlier one, due to the accumulative effect of the quantization error, we have

\begin{equation}
	 \mathcal{L}(W_i, X_i) \leq \mathcal{L}(W_{i+1}, X_{i+1}). 
\end{equation}
If the capacity of the later module is smaller than the earlier one, the quantization error of the module will increase, i.e., 
\begin{equation}
\begin{aligned}
\begin{split}
\lim_{ModCap \to 0} \Delta W = \lim_{ModCap \to 0} \Delta X = \zeta,
\end{split}
\end{aligned}
\end{equation}
where $\zeta$ is the upper bound on the quantization error of the module.
Therefore, for arbitrary $\epsilon > 0$, there exists a module capacity $ n $ such that ${\mathcal{L}}^{'}(W_{i+1},X_{i+1}) > \zeta-\epsilon$ when $ModCap<n$. Since $\epsilon$ is arbitrary, we take $\epsilon = {\epsilon}_1 = \zeta-\mathcal{L}(W_{i+1},X_{i+1})$. Consequently, we have
\begin{equation}\label{ineq:limit_qe_aggravate}
\small
\begin{aligned}
\begin{split}
{\mathcal{L}}^{'}(W_{i+1},X_{i+1}) > \zeta-{\epsilon}_1 = \mathcal{L}(W_{i+1},X_{i+1})>\mathcal{L}(W_{i},X_{i}).
\end{split}
\end{aligned}
\end{equation}
Therefore, the corollary is proved. 

\end{proof}

\begin{figure}[t]
\centering
\includegraphics[width=1.0\linewidth]{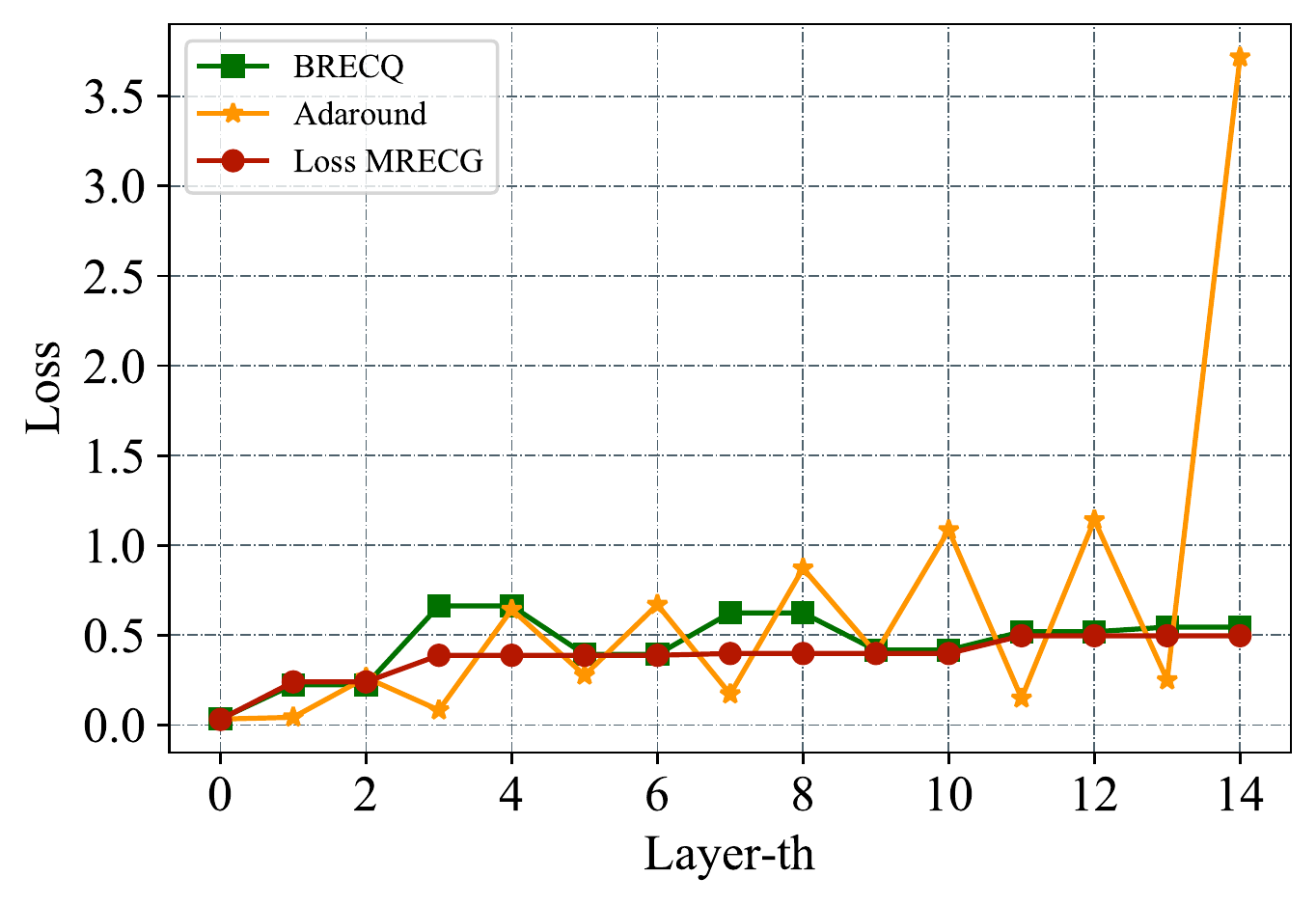}
\caption{Loss distribution of different algorithms on ResNet-$18$. We ignore the reconstruction loss of the final block and the fully connected layer. Convolutional layers within a block of the block reconstruction algorithm share the reconstruction loss of that block.}
\label{fig:loss_distribute_r18}
\end{figure}

\section{Loss distribution of ResNet-18}
\label{loss_distri_r18}

In Fig.~\ref{fig:loss_distribute_r18} we present the reconstruction loss distribution of ResNet-$18$ quantized to $4$/$4$ bit. AdaRound shows the most drastic loss oscillations. As a result, this causes a sharp increase in reconstruction error due to irretrievable information loss. The block reconstruction strategy of BRECQ mitigates the oscillations in AdaRound, but still shows small oscillations within some layers. For example, oscillations occur between the $4$-th layer and $5$-th layer. MRECG completely smoothes out the oscillations, allowing the reconstruction loss of ResNet-$18$ to be smoothly incremented by the accumulation of quantization errors.

\begin{table}[!t]
\centering

\setlength{\tabcolsep}{1.8mm}{
\begin{subtable}[ResNet-$18$]{1\linewidth}
\caption{ResNet-$18$}
\centering
\begin{tabular}{ccc}
\toprule  %添加表格头部粗线
Methods &
Searching Time &
Top-$1$ Acc(\%)\\
\midrule  %添加表格中横线
BRECQ   & $\textbf{0}$ & $64.83$\\
QDROP   & $\textbf{0}$ & $65.56$\\
ModCap MRECG   & $\textbf{0}$ & $66.07$\\
Loss MRECG   & $19.1~\text{mins}$ & $\textbf{66.30}$\\
\bottomrule %添加表格底部粗线
\end{tabular}
\end{subtable}

\begin{subtable}[ResNet-$50$]{1\linewidth}
\caption{ResNet-$50$}
\centering
\begin{tabular}{ccc}
\toprule  %添加表格头部粗线
Methods &
Searching Time &
Top-$1$ Acc(\%)\\
\midrule  %添加表格中横线
BRECQ   & $\textbf{0}$ & $70.06$\\
QDROP   & $\textbf{0}$ & $71.07$\\
ModCap MRECG   & $\textbf{0}$ & $71.65$\\
Loss MRECG   & $60.9~\text{mins}$ & $\textbf{71.92}$\\
\bottomrule %添加表格底部粗线
\end{tabular}
\end{subtable}

\begin{subtable}[MobileNetV$2$]{1\linewidth}
\caption{MobileNetV$2$$\times0.5$}
\centering
\begin{tabular}{ccc}
\toprule  %添加表格头部粗线
Methods &
Searching Time &
Top-$1$ Acc(\%)\\
\midrule  %添加表格中横线
BRECQ   & $\textbf{0}$ & $29.79$\\
QDROP   & $\textbf{0}$ & $35.14$\\
ModCap MRECG   & $\textbf{0}$ & $38.02$\\
Loss MRECG   & $39.5~\text{mins}$ & $\textbf{38.43}$\\
\bottomrule %添加表格底部粗线
\end{tabular}
\end{subtable}
}
\caption{Searching time of reconstruction granularity and Top-$1$ accuracy for different algorithms on ResNet-$18$, ResNet-$50$ and MobileNetV$2$. All weights and activations are quantized to $3$ bit.}
\label{tab:search_time}
\end{table}

\begin{table}[t!]
\centering
\setlength{\tabcolsep}{3.7mm}{
\begin{tabular}{ccc}
\toprule  %添加表格头部粗线
Methods & 
W/A &
Reg$600$M \\
\midrule  %添加表格中横线

Full Prec. & $32$/$32$ & $73.52$ \\
\midrule  %添加表格中横线
ZeroQ \cite{cai2020zeroq} & $4$/$4$ & $28.54$ \\
LAPQ \cite{nahshan2021loss} & $4$/$4$ & $57.71$ \\
AdaRound \cite{nagel2020up} & $4$/$4$ & $68.20$ \\
$\text{BRECQ}^{*}$ \cite{li2021brecq} & $4$/$4$ & $70.44$ \\
QDROP \cite{wei2022qdrop} & $4$/$4$ & $70.62$ \\
\rowcolor{gray!10} Ours+QDROP & $4$/$4$ & \textbf{71.22 \footnotesize \textcolor{red}{(+0.60)}} \\
\midrule  %添加表格中横线
LAPQ \cite{nahshan2021loss} & $2$/$4$ & $0.17$ \\
AdaRound \cite{nagel2020up} & $2$/$4$ & $57.00$ \\
$\text{BRECQ}^{*}$ \cite{li2021brecq} & $2$/$4$ & $61.77$ \\
QDROP \cite{wei2022qdrop} & $2$/$4$ & $63.10$ \\
\rowcolor{gray!10} Ours+QDROP & $2$/$4$ & \textbf{65.16 \footnotesize \textcolor{red}{(+2.06)}} \\
\midrule  %添加表格中横线
$\text{AdaRound}^{*}$ \cite{nagel2020up} & $3$/$3$ & $58.29$ \\
$\text{BRECQ}^{*}$ \cite{li2021brecq} & $3$/$3$ & $62.61$ \\
QDROP \cite{wei2022qdrop} & $3$/$3$ & $64.53$ \\
\rowcolor{gray!10} Ours+QDROP & $3$/$3$ & \textbf{66.08 \footnotesize \textcolor{red}{(+1.55)}} \\
\midrule  %添加表格中横线
$\text{BRECQ}^{*}$ \cite{li2021brecq} & $2$/$2$ & $28.89$ \\
QDROP \cite{wei2022qdrop} & $2$/$2$ & $38.90$ \\
\rowcolor{gray!10} Ours+QDROP & $2$/$2$ & \textbf{43.67 \footnotesize \textcolor{red}{(+4.77)}} \\
\bottomrule %添加表格底部粗线
\end{tabular}
}
\caption{A comparison of RegNet with the State-Of-The-Art method. "*" indicates that we reproduce the algorithm in a uniform experimental setup based on open-source code. W/A represents the weights and activations bit width, respectively. Under different bit configurations, we show the comparison of our algorithm with a wide range of PTQ methods on RegNet.}
\label{tab:regnet}
\end{table}

\section{MRECG searching time}
\label{search_time}

The mixed reconstruction granularity based on the loss metric requires a small portion of the data for model reconstruction to obtain the loss distribution. As shown in Tab.~\ref{tab:search_time}, the searching time of Loss MRECG on ResNet-$18$, ResNet-$50$ and MobileNetV$2$ are $19.1$ mins, $60.9$ mins and $39.5$ mins, respectively. Loss MRECG achieves the global optimum with a small time overhead. ModCap MRECG does not require the PTQ reconstruction process and thus it is more efficient. Meanwhile, the locally optimal ModCap MRECG also outperforms the previous SOTA method and has a small performance degradation compared to the global optimum.

\section{Classification accuracy of RegNet}
\label{RegNet}

We complement the performance of MRECG on RegNet \cite{radosavovic2020designing}. We quantize the Reg600M model to different bit configurations. As shown in Tab.~\ref{tab:regnet}, MRECG achieves optimality for different bit configurations. Specifically, we obtain $65.16\%$ Top-$1$ accuracy on Reg600M with $2$/$4$ bit, which is $2.06\%$ higher than QDROP.

\section{Implementation details}
\label{Implementation——details}

For the hyper-parameters of the reconstruction, we keep the same as in the previous approaches \cite{li2021brecq, wei2022qdrop}. Specifically, the number of reconstruction iterations in each module is $20,000$, and we set a consistent linearly decreasing temperature b, which ranges from $20$ to $2$. We apply a loss ratio of $0.01$ and $0.1$ in ResNet and MobileNetV$2$, respectively, to balance the reconstruction loss and rounding loss. In the combination of MRECG and QDROP, we adopt $0.5$ probability for each element to decide whether to quantize or retain full precision as described in QDROP. Our pre-trained models for ResNet-$18$, ResNet-$50$ and MobileNetV$2$ are from the PyTorch library \cite{paszke2019pytorch}. Our different scaled MobileNetV2 are obtained by our own training. The number of joint optimization modules $k$ is set to $2$,$4$,$7$ in ResNet-$18$, ResNet-$50$ and MobileNetV$2$, respectively.

\end{document}